\title{On the Explanatory Power of Decision Trees}
\author[1]{Gilles Audemard}
\author[1]{Steve Bellart}
\author[1]{Louenas Bounia}
\author[1]{Frédéric Koriche}
\author[1]{Jean-Marie Lagniez}
\author[1,2]{Pierre Marquis}
\affil[1]{CRIL, Université d'Artois \& CNRS, France}
\affil[2]{Institut Universitaire de France}
\date{email: name@cril.fr}
\begin{document}

\maketitle

\begin{abstract}
Decision trees have long been recognized as models of choice in sensitive applications where interpretability is of paramount importance.
%Notably, the prediction made by a decision tree on classifying an incoming data instance, can be easily understood 
%by simply reading the unique root-to-leaf path that matches with the instance. 
%However, such ``direct reasons'' can contain many redundant features. Furthermore, they are not suited to explaining why the instance
%is not classified as expected by the explainee. This motivates to take account for other types of reasons in the case of decision trees, namely, sufficient reasons, 
%that are irredundant abductive explanations, and contrastive explanations, that are suited to answering ``why not?'' questions.
%
In this paper, we examine the computational ability of Boolean decision trees in deriving, minimizing, and counting sufficient reasons and contrastive
explanations. We prove that the set of all sufficient reasons of minimal size for an instance given a decision tree 
can be exponentially larger than the size of the input (the instance and the decision tree). 
%Accordingly, the number of sufficient reasons prevents from generating their set most of the time, and may lead anyway to a set that is too large 
%for being cognitively acceptable by the explainee. 
Therefore, generating the full set of sufficient reasons can be out of reach. 
In addition, computing a single sufficient reason does not prove enough in general; indeed, two sufficient reasons for the same instance may
differ on many features. To deal with this issue and generate synthetic views of the set of all sufficient reasons,
we introduce the notions of relevant features and of necessary features that characterize the (possibly negated) features
appearing in at least one or in every sufficient reason, and we show that they can be computed in polynomial time. We also
introduce the notion of explanatory importance, that indicates how frequent each (possibly negated) feature is in the set of all sufficient reasons.
%Though computing the importance of a feature in the set of sufficient reasons is demanding, we show how it can be
We show how the explanatory importance of a feature and the number of sufficient reasons 
can be obtained via a model counting operation, which turns out to be practical in many cases. 
We also explain how to enumerate sufficient reasons of minimal size. 
We finally show that, unlike sufficient reasons, the set of all contrastive explanations for an instance given a decision tree 
can be derived,  minimized and counted in polynomial time. 
\end{abstract}

%\input{introduction}
%%%%%%%%%%%%%%%%%%%%%%%%%%%%%%%%%%%%%%%%%%%%%%%%%%%%%%%%%%%%%%%%%%%%%%%%%%%%%%%%
\section{Introduction}
%%%%%%%%%%%%%%%%%%%%%%%%%%%%%%%%%%%%%%%%%%%%%%%%%%%%%%%%%%%%%%%%%%%%%%%%%%%%%%%%

In essence, explaining a decision to a person is to give the details or \emph{reasons} that help a person (the explainee) 
understand why the decision has been made. This is a significant issue 
especially when decisions are made by Machine Learning (ML) models, such as random forests, Markov networks,
support vector machines, and deep neural networks. Actually, with the growing number of applications that rely on ML techniques, 
researches on eXplainable AI (XAI) have become increasingly important, by providing efficient methods for 
interpreting ML models, and explaining their decisions 
(see for instance \cite{DBLP:conf/aiia/FrosstH17,GuidottiMRTGP19,Hooker19,Huysmans11,DBLP:conf/aaai/IgnatievNM19,Kim18,Lundberg17,DBLP:journals/ai/Miller19,Molnar19,Ribeiro16,DBLP:conf/sat/ShihDC19}).  

When dealing with Boolean classifiers, which is what we do in this paper, two decisions are possible, only: 
$1$ for the instances classified as positive instances, and $0$ for the remaining ones (the negative instances).
Whatever the way $\vec x$ has been classified, an explainee may seek for explanations from two distinct types \cite{DBLP:journals/ai/Miller19}.
On the one hand, ``abductive'' explanations for $\vec x$ are intended to explain why $\vec x$ has been classified in the way it has been
classified by the ML model (thus, addressing the ``Why?'' question). On the other hand, the purpose of ``contrastive'' (also known as ``counterfactual'') explanations
for $\vec x$ is to explain why $\vec x$ has not been classified by the ML model as the explainee expected it (thus, addressing the ``Why not?'' question). In both cases, explanations
that are as simple as possible are preferred (where simplicity is modeled as irredundancy, or even as size minimality).\footnote{Note that those definitions of ``abductive'' and ``contrastive'' explanations, though based respectively on the ``Why?'' question and the ``Why not?'' question, differ from the ones reported in \cite{DBLP:journals/corr/abs-2012-11067} on two aspects. On the one hand, the definitions in \cite{DBLP:journals/corr/abs-2012-11067} are not restricted to the Boolean case. On the other hand, as in other papers about explanations (see e.g., \cite{EiterGottlob95,DBLP:journals/corr/abs-2012-11067}), irredundancy is not considered as mandatory in our definitions.}.

Although there is no formal notion of \emph{interpretability} \cite{DBLP:journals/cacm/Lipton18}, for classification problems, \emph{decision trees} \cite{DBLP:books/wa/BreimanFOS84,DBLP:journals/ml/Quinlan86} 
%models of paramount importance in XAI, as they can be easily read by recursively breaking a choice into sub-choice until a decision is reached.
are arguably among the most interpretable ML models. Because of their interpretability, decision trees are often considered as target models for distilling a black-box model into 
a comprehensible one \cite{BreimanShang96,DBLP:conf/aiia/FrosstH17}. Furthermore, decision trees are often the components of choice  
for building (less interpretable, but potentially more accurate) ensemble classifiers, 
such as random forests \cite{DBLP:journals/ml/Breiman01} and gradient boosted decision trees \cite{Chen16}.  

The interpretability of decision trees is endowed with two key characteristics. On the one hand, decision trees are \emph{transparent}: 
each node in a decision tree has some meaning, and the principles used for generating all nodes can be explained.  
On the other hand, decision trees are \emph{locally explainable}: by construction of a decision tree $T$, 
any input instance $\vec x$ is mapped to a unique root-to-leaf path that yields to a decision label. 
The subset of (positive and negative) features $t_{\vec x}^T$ occurring in the path used to find the right label $1$ or $0$ for $\vec x$ in the decision tree $T$ 
can be viewed as a ``direct reason'' for classifying $\vec x$ as a positive instance or as a negative instance. 
$t_{\vec x}^T$ is an abductive explanation for $\vec x$ given $T$, which explains why $\vec x$ has been classified by $T$
as it has been classified. Indeed, every instance $\vec x'$ that coincides with $\vec x$ on $t_{\vec x}^T$ is classified by $T$
in the same way as $\vec x$.
%\pierre{Est-ce que l'on va considérer plus de 2 classes ? Si pas, on peut simplement parler d'instances positives et d'instances négatives.}
%
%Decision trees have long been recognized as models of choice in sensitive applications where interpretability is of paramount importance.
%Notably, the prediction made by a decision tree on classifying an incoming data instance, can be easily understood 
%by simply reading the unique root-to-leaf path that matches with the instance. 
However, such ``direct reasons'' can contain arbitrarily many redundant features \cite{DBLP:journals/corr/abs-2010-11034}. 
%Furthermore, they are not suited to explaining why the instance is not classified as expected by the explainee. 
This motivates to take account for other types of abductive explanations in the case of decision trees, namely, sufficient reasons  \cite{DarwicheHirth20} 
(also known as prime implicant explanations \cite{ShihCD18}), that are irredundant abductive explanations, and minimal sufficient reasons 
(i.e., those sufficient reasons of minimal size). %, and contrastive explanations, that are suited to answering ``why not?'' questions.

In this paper, we examine the computational ability of Boolean decision trees in deriving, minimizing and counting sufficient reasons and contrastive
explanations. We prove that the set of all sufficient reasons of minimal size for an instance given a decision tree can be exponentially larger than the size of the input. 
%Accordingly, the number of sufficient reasons prevents from generating their set most of the time, and may lead anyway to a set that is too large 
%for being cognitively acceptable by the explainee. 
When this is the case, generating the full set of sufficient reasons (i.e., the complete reason for the instance \cite{DarwicheHirth20}) is typically out of reach. 
In addition, computing a single sufficient reason does not prove enough in general; indeed; two sufficient reasons for the same instance may
differ on many features. To deal with this issue and generate synthetic views of the set of all sufficient reasons,
we introduce the notions of relevant features and of necessary features that characterize the (possibly negated) features
appearing in at least one or in every sufficient reason, and we show that they can be computed in polynomial time. We also
introduce the notion of explanatory importance, that indicates how frequent each (possibly negated) feature is in the set of all sufficient reasons.
Though deriving the explanatory importance of a feature in the set of sufficient reasons and determining the cardinality of this set are two computationally demanding tasks, 
we show how they can be achieved thanks to model counting operation, which turns out to be practical in many cases. We also explain how to enumerate
sufficient reasons of minimal size, which is a way to count them when they are not too numerous. 
We finally show that, from a computational standpoint, contrastive explanations highly depart from sufficient reasons. Indeed,
the set of all contrastive explanations for an instance given a decision tree can be computed in polynomial time. As a consequence, such explanations can also be
minimized and counted in polynomial time.

The rest of the paper is organized as follows. % \textcolor{blue}{TBC. }
Preliminaries about decision trees, abductive reasons, and contrastive explanations
%direct reasons, sufficient reasons, 
%minimal sufficient reasons, 
%and contrastive explanations 
are given in Section \ref{formal}.
%Preferred reasons are defined and analyzed in Section \ref{beyond}.
The computation of all sufficient reasons is considered in Section \ref{all}.
Necessary and relevant features are presented in this section, 
%Section \ref{explanatory},
as well as the approach for assessing the explanatory importance
of a feature and for counting the number of sufficient reasons. 
We also explain there how minimal sufficient reasons can be enumerated.
%based on model counting.
An algorithm for computing all the contrastive explanations for the instance given the decision tree is presented in Section \ref{contrastive}.
Experimental results are reported in Section \ref{experiments}.
Finally, Section \ref{conclusion} concludes the paper. % and presents some perspectives for further research. 
Proofs 
%and additional empirical results 
are reported 
%as a supplementary material. 
in a final appendix. Additional empirical results are available on the web page
of the EXPE\textcolor{orange}{KC}TATION project: \url{http://www.cril.univ-artois.fr/expekctation/}.

%
%These are the main research questions considered in the paper. Our objective is to contribute to address them.
%To this purpose, we present some refinements / generalizations of the notion of sufficient reasons.  Thus, the
%notions of \emph{comprehensible reasons} and \emph{inclusion-preferred reasons} are introduced as a contribution to the first issue, and
%notions of \emph{minimal reasons} (and more generally of \emph{minimal-weight reasons}) as a contribution to the second issue. 
%We also present a notion of  \emph{probabilistic reasons}, that  generalizes sufficient reasons and
%prove useful for both issues. 

%\input{preliminaries}
%%%%%%%%%%%%%%%%%%%%%%%%%%%%%%%%%%%%%%%%%%%%%%%%%%%%%%%%%%%%%%%%%%%%%%%%%%%%%%%%
%\section{Decision Trees, Direct Reasons, Sufficient Reasons, 
%Minimal Sufficient Reasons, 
%and Contrastive Explanations}\label{formal}
\section{Decision Trees, Abductive and Contrastive Explanations}\label{formal}
%%%%%%%%%%%%%%%%%%%%%%%%%%%%%%%%%%%%%%%%%%%%%%%%%%%%%%%%%%%%%%%%%%%%%%%%%%%%%%%%

%% \paragraph{Boolean formulae.} 
For an integer $n$, let $[n]$ be the set $\{1,\cdots,n\}$.
By $\mathcal F_n$ we denote the class of all Boolean functions from $\{0,1\}^n$ to $\{0,1\}$,
and  we use $X_n  = \{x_1, \cdots, x_n\}$ to denote the set of input Boolean variables, corresponding to the features under consideration. 
Any assignment $\vec x \in \{0,1\}^n$ is called an \emph{instance}. If $f(\vec x) = 1$ 
for some $f \in \mathcal F_n$, then $\vec x$ is called a \emph{model} of $f$.
$\vec x$ is a {\em positive instance} when $f(\vec x) = 1$ and a {\em negative instance} when $f(\vec x) = 0$.

We refer to $f$ as a \emph{propositional formula} when it is described using the Boolean connectives $\land$ (conjunction), $\lor$ (disjunction) and 
$\neg$ (negation), together with the Boolean constants $1$ (true) and $0$ (false). 
As usual, a \emph{literal} $\ell$ is a variable $x_i$ (a positive literal) or its negation $\neg x_i$, also denoted $\overline x_i$ (a negative literal).
A positive literal $x_i$ is associated with a positive feature (i.e., $x_i$ is set to $1$), while a negative literal $\overline x_i$ is associated with a negative feature (i.e., $x_i$ is set to $0$). 
A \emph{term} (or \emph{monomial}) $t$ is a conjunction of literals, and a \emph{clause} $c$ is a disjunction of literals. 
A \emph{\dnf\ formula} is a disjunction of terms and a \emph{\cnf\ formula} 
is a conjunction of clauses. The set of variables occurring in a formula $f$ is denoted $\Var{f}$.
A formula $f$ is \emph{consistent} if and only if it has a model. 
A \cnf\ formula is \emph{monotone} whenever every occurrence of a literal in the formula has the same polarity
(i.e., if a literal occurs positively (resp. negatively) in the formula, then it does not have any negative (resp. positive) occurrence in the formula). 
A formula $f_1$ \emph{implies} a formula $f_2$, noted $f_1 \models f_2$, if and only if every model of $f_1$ is a model of $f_2$.
Two formulae $f_1$ and $f_2$ are \emph{equivalent}, noted $f_1 \equiv f_2$
whenever they have the same models.
%Pierre : on utilise le conditioning dans les preuves
The \emph{conditioning} of a formula $f$ by a literal $\ell$, denoted $f \mid \ell$, 
is the formula obtained from $f$ by replacing each occurrence of $x_i$ with $1$ (resp. $0$) and each occurrence of $\overline x_i$
with $0$ (resp. $1$) if $\ell = x_i$ (resp. $\ell = \overline x_i$). 
%More generally, the conditioning of a formula $f$ by a 
%consistent term $t = \bigwedge_{i=1}^k \ell_i$, denoted $f \mid t$, is the formula 
%the iterated conditioning of the formula by every literal of the term (the order does not matter).
% = \bigwedge_{i=1}^k \ell_i$, denoted $f \mid t$, is the iterated conditioning of the formula by every literal of the term
%$(\ldots ((f \mid \ell_1) \mid \ell_2) \ldots )\mid \ell_k$.

In what follows, we shall often treat assignments as terms, and terms and clauses as sets of literals. 
Given an assignment $\vec z \in \{0,1\}^n$, the corresponding term is defined as
\begin{align*}
t_{\vec z} = \bigwedge_{i=1}^n x_i^{z_i} \mbox{ where } x_i^0 = \overline x_i \mbox{ and } x_i^1 = x_i
\end{align*}
A term $t$ \emph{covers} an assignment $\vec z$ if $t \subseteq t_{\vec z}$. 
An \emph{implicant} of a Boolean function $f$ is a term that implies $f$. %, that is, a term $t$ such that $f(\vec z) = 1$ 
%for every assignment $\vec z$ covered by $t$. This is denoted by $t \models f$.
A \emph{prime implicant} of $f$ is an implicant $t$ of $f$ such that no proper subset of $t$ is an implicant of $f$. 
%$\mathit{pi}(f)$ denotes the set of all prime implicants of $f$.
Dually, an  \emph{implicate} of a Boolean function $f$ is a clause that is implied by $f$, and
a \emph{prime implicate} of $f$ is an implicate $c$ of $f$ such that no proper subset of $c$ is an implicate of $f$.

% Je crois bien qu'on n'a plus besoin de cela :
%Given a propositional formula $F$ and a literal $l_i$, the \emph{conditioning} of $F$ by $l_i$,
%denoted $F \mid l_i$ is the formula obtained from $F$ by replacing each occurrence of $x_i$ with $1$ (resp. $0$) and each occurrence of $\overline x_i$
%with $0$ (resp. $1$) if $l_i = x_i$ (resp. $l_i = \overline x_i$). 

With these basic notions in hand, we shall focus on the following representation class of Boolean functions:

\begin{definition}[Decision Tree] A (Boolean) \emph{decision tree} is a binary tree $T$, each of whose internal nodes is labeled with one of
    $n$ input Boolean variables, and whose leaves are labeled $0$ or $1$. Every variable is assumed (without loss of generality) to appear at most once on any root-to-leaf path (read-once property).  
    The value $T(\vec x) \in \{0,1\}$ of $T$ on an input instance $\vec x$ is given by the label of the
    leaf reached from the root as follows: at each node, go to the left or right child depending on whether the input value of the corresponding variable is $0$ or $1$, 
    respectively. The size of $T$, denoted $\size{T}$, is given by the number of its nodes.
\end{definition}

The class of decision trees over $X_n$ is denoted $\dt_n$. 
%Obviously enough, any Boolean function in $\mathcal F_n$ can be represented by a decision tree in $\dt_n$. --> Dangereux 
% car les DT peuvent avoir une taille exponentielle.
%Note that any Boolean function in $\mathcal F_n$ can be represented by a decision tree
%in $\dt_n$ with $2^n$ leaves and depth of $n + 1$. Since the sample complexity of $\dt_n$ is exponential in $n$, most 
%theoretical results about decision trees focus on restricted subclasses, such as the class  
%$\dt^k_n$ of decision trees over $X_n$ of depth $k$.
It is well-known that any decision tree $T \in \dt_n$ can be transformed in linear time into an equivalent disjunction of terms, denoted $\dnf(T)$,
where each term corresponds to a path from the root to a leaf labeled with $1$. Dually, $T$ can be transformed in linear time into 
a conjunction of clauses, denoted $\cnf(T)$, where each clause is the negation of the term describing a path from the root to a leaf labeled with $0$.

For illustration, the following toy example will be used throughout the paper as a running example:

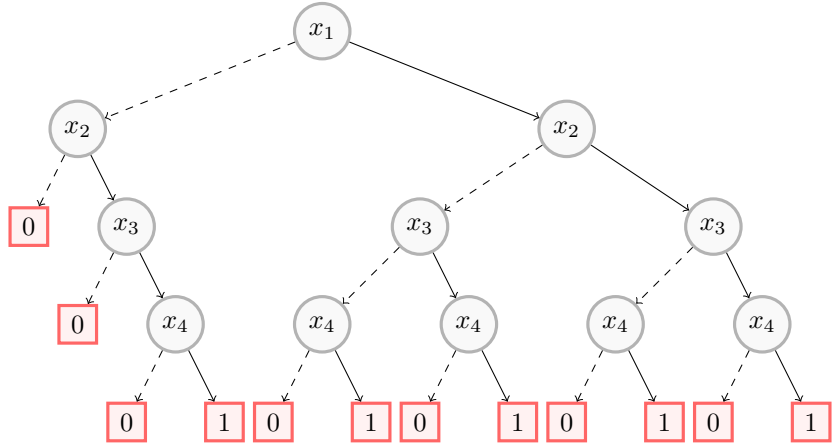
\begin{figure}[t]
\centering
\scalebox{1}{
      \begin{tikzpicture}[scale=0.65, roundnode/.style={circle, draw=gray!60, fill=gray!5, very thick, minimum size=7mm},
squarednode/.style={rectangle, draw=red!60, fill=red!5, very thick, minimum size=5mm}] 
        \node[roundnode](root) at (7,9){$x_1$};
        \node[roundnode](n1) at (2,7){$x_2$};
        \node[roundnode](n2) at (12,7){$x_2$};
        \node[squarednode](n11) at (1,5){$0$};          
        \node[roundnode](n12) at (3,5){$x_3$};  
        \node[squarednode](n121) at (2,3){$0$};  
        \node[roundnode](n122) at (4,3){$x_4$};   
        \node[squarednode](n1221) at (3,1){$0$}; 
        \node[squarednode](n1222) at (5,1){$1$}; 
        
        \node[roundnode](n21) at (9,5){$x_3$};
        \node[roundnode](n22) at (15,5){$x_3$};   
        \node[roundnode](n211) at (7,3){$x_4$};
        \node[roundnode](n212) at (10,3){$x_4$};              
        \node[roundnode](n221) at (13,3){$x_4$};
        \node[roundnode](n222) at (16,3){$x_4$};   
        
        \node[squarednode](n2111) at (6,1){$0$}; 
        \node[squarednode](n2112) at (8,1){$1$};         
        \node[squarednode](n2121) at (9,1){$0$}; 
        \node[squarednode](n2122) at (11,1){$1$}; 
        \node[squarednode](n2211) at (12,1){$0$}; 
        \node[squarednode](n2212) at (14,1){$1$}; 
        \node[squarednode](n2221) at (15,1){$0$}; 
        \node[squarednode](n2222) at (17,1){$1$}; 
                               
        \draw[->,dashed] (root) -- (n1);
        \draw[->](root) -- (n2);
        \draw[->,dashed] (n1) -- (n11);
        \draw[->](n1) -- (n12);       
        \draw[->,dashed] (n12) -- (n121);
        \draw[->](n12) -- (n122); 
        \draw[->,dashed] (n122) -- (n1221);
        \draw[->](n122) -- (n1222); 
        
        \draw[->,dashed] (n2) -- (n21);
        \draw[->](n2) -- (n22);    
        \draw[->,dashed] (n21) -- (n211);
        \draw[->](n21) -- (n212);                
        \draw[->,dashed] (n22) -- (n221);
        \draw[->](n22) -- (n222);  
        \draw[->,dashed] (n211) -- (n2111);
        \draw[->](n211) -- (n2112);
        \draw[->,dashed] (n212) -- (n2121);
        \draw[->](n212) -- (n2122);    
        \draw[->,dashed] (n221) -- (n2211);
        \draw[->](n221) -- (n2212);
        \draw[->,dashed] (n222) -- (n2221);
        \draw[->](n222) -- (n2222);  
      \end{tikzpicture}
 }
    \caption{A decision tree $T$ for recognizing {\it Cattleya} orchids. The left (resp. right) child of any decision node labelled by $x_i$ corresponds to 
    the assignment of $x_i$ to $0$ (resp. $1$).\label{fig:orchids}}
    \end{figure}

\begin{example}\label{running-ex}
The decision tree in Figure \ref{fig:orchids} separates {\it Cattleya} orchids from other orchids using the following features:
$x_1$: ``has fragrant flowers'', $x_2$: ``has one or two leaves'', $x_3$: ``has large flowers'', and $x_4$: ``is sympodial''. %, and $x_5$: ``has white flowers''.
%\begin{itemize}
%\item $x_1$: ``has fragrant flowers'';
%\item $x_2$: ``has one or two leaves'';
%\item $x_3$: ``has large flowers'';
%\item $x_4$: ``is sympodial'';
%\item $x_5$: ``has white flowers''.
%\end{itemize}
%By construction, $\overline x_1 \land  x_2 \land x_3 \land x_4$ is a term of $\dnf(T)$, and 
%$x_1 \lor x_2$ is a clause of $\cnf(T)$. 
%Note that $x_5$ is a (globally) irrelevant feature since it does not appear in any path of $T$.
\end{example}

%The key focus of this study is to explain \emph{why} a decision tree $T \in \dt_n$
%classifies an incoming instance $\vec x$ as it has been classified (i.e., positively or negatively). %positive. 
%A \emph{reason} for $T(\vec x) = 1$ (resp. $T(\vec x) = 0$) typically takes the form of conjunction of literals 
%which together justify the prediction $1$ (resp. $0$) made by $T$ on $\vec x$. 

As a salient characteristic, decision trees convey a single explicit abductive explanation for classifying any input instance:  
%for decision trees since (i) a reason why  $T(\vec x) = 0$ is precisely the same as a reason why $(\neg T)(\vec x) = 1$, 
%and (ii) $\neg T$ is the decision tree obtained from $T$ by just replacing the label of each leaf.
%(see e.g. \cite{Koricheetal13}) --> Pas besoin de citer c'est trivial et il est déjà connu que DT est clos par complémentation.

%As stressed in the introduction, decision trees are models of paramount importance in XAI, as they can be easily read by 
%recursively breaking a choice into sub-choice until a decision is reached. Notably, decision trees convey 
%a single explicit explanation for classifying any input instance:      
%--> C'est la caractéristique principale des DT, je pense qu'il faut mettre ce paragraphe directement en intro.

\begin{definition}[Direct Reason] 
    Let $T \in \dt_n$ and $\vec x \in \{0,1\}^n$. % such that $T(\vec x) = 1$.
    The \emph{direct reason} for $\vec x$ given $T$ is the term, denoted $t_{\vec x}^T$, 
    corresponding to the unique root-to-leaf path of $T$ that is compatible with $\vec x$. 
\end{definition}

Another important notion of abductive explanations is the following concept of {\em sufficient reason} \cite{DarwicheHirth20}, that, unlike 
the notion of direct reason, is not specific to decision trees:

%Another important notion when dealing with the explanation of predictions is the following notion of a {\em sufficient reason}\cite{DarwicheHirth20}, which, unlike 
%the notion of direct reason, is not specific to decision trees:

\begin{definition}[Sufficient Reason] 
    Let $f \in \mathcal F_n$ and $\vec x \in \{0,1\}^n$ such that $f(\vec x) = 1$ (resp. $f(\vec x) = 0$).
    A \emph{sufficient reason} for $\vec x$ given $f$ is a prime implicant $t$ of $f$ (resp. $\neg f$) that covers $\vec x$.
    $\mathit{sr}(\vec x, f)$ denotes the set of sufficient reasons for  $\vec x$ given $f$. %, where
 %   $\mathit{pi}(f)$ is the set of  prime implicants of $f$.
    %= \{t \in \mathit{pi}(f) : t_{\vec x} \models t\}$ 
\end{definition}

Thus, a sufficient reason \cite{DarwicheHirth20} (also known as prime implicant explanation \cite{ShihCD18})
% or an abductive explanation \cite{DBLP:conf/aaai/IgnatievNM19}) 
for an instance $\vec x$ given a class described by a Boolean function $f$ is a subset $t$ of the characteristics of $\vec x$ that is minimal w.r.t. set inclusion  such that any instance $\vec x'$ sharing this set $t$ of characteristics is classified by $f$ as $\vec x$ is. Thus, when $f(\vec x) = 1$, $t$ is a sufficient reason for $\vec x$ given $f$ if and only if $t$ is a prime implicant of $f$ such that $\vec x$ implies $t$, and when $f(\vec x) = 0$, $t$ is a sufficient reason for $\vec x$ given $f$ if and only if $t$ is a prime implicant of $\neg f$ such that $t$ covers $\vec x$. Accordingly, sufficient reasons are suited to explain why the instance at hand $\vec x$ has been classified by $f$ as it has been classified.
Unlike direct reasons \cite{DBLP:journals/corr/abs-2010-11034}, sufficient reasons do not contain any redundant feature.

When considering the sufficient reasons of the input instance, one may be interested in focusing on the shortest ones, alias the minimal 
sufficient reasons. Those reasons are valuable since conciseness is often a desirable property of explanations
(Occam's razor).
%, (2) as a preference criterion, minimality neither is information-demanding nor is domain-specific. 
Formally: %, minimal sufficient reasons can be defined as follows:

\begin{definition}[Minimal Sufficient Reason] 
    Let $f \in \mathcal F_n$ and $\vec x \in \{0,1\}^n$. % such that $f(\vec x) = 1$.
    A \emph{minimal sufficient reason} for $\vec x$ given $f$ is a sufficient reason for $\vec x$ given $f$ that contains a minimal number
    of literals.
\end{definition}

%\begin{example} Based on our running example, $x_1 \wedge x_4$ is a minimal sufficient reason for $\vec x$ given $T$, 
%    while $x_2 \wedge x_3 \wedge x_4$ is not.
%\end{example}

%Importantly, \emph{minimal sufficient reasons} should not to be confused with \emph{minimum-cardinality explanations}  \cite{ShihCD18},
%where the minimality condition bears on the features set to $1$ in the input instance. 
%In fact, as a direct by-product of Proposition 8 from \cite{Audemardetal20}, 
%computing a minimum-cardinality explanation for an input instance $\vec x$ given a decision tree $T$
%can be done in polynomial-time. Unfortunately, this is not the case for minimal sufficient reasons 
%which are computationally more demanding:

Finally, unlike direct and (possibly minimal) sufficient reasons that aim to explain the classification of the instance $\vec x$ under consideration 
as achieved by the classifier $f$, contrastive explanations are valuable when 
 $\vec x$ has not been classified by $f$ as expected by the explainee. In this case, one looks for minimal subsets of the features that when switched in $\vec x$ are enough to get instances that are classified positively (resp. negatively) by $f$ if $\vec x$ is classified negatively (resp. positively) by $f$.
Formally, a {\em contrastive explanation} for $\vec x$ given $f$  \cite{DBLP:journals/corr/abs-2012-11067} is a subset $t$ of the characteristics of $\vec x$ that is minimal w.r.t. set inclusion 
among those such that at least one instance $\vec x'$ that coincides with $\vec x$ except on the characteristics from $t$ is not classified by $f$ as $\vec x$ is.

\begin{definition}[Contrastive Explanation]
Let $f \in \mathcal F_n$ and $\vec x \in \{0,1\}^n$ such that $f(\vec x) = 1$ (resp. $f(\vec x) = 0$).
A {\em contrastive explanation} for $\vec x$ given $f$ is a term $t$ over $X_n$ such that $t \subseteq t_{\vec x}$, 
$t_{\vec x} \setminus t$ is not an implicant of $f$ (resp. $\neg f$), and for every $\ell \in t$, $t \setminus \{\ell\}$ does not satisfy this last condition.
\end{definition}

\begin{example} 
Based on our running example, we can observe that $T(\vec x) = 1$ for the instance $\vec x = (1, 1, 1, 1)$. %, 1)$.
The direct reason for $\vec x$ given $T$ is the term $t_{\vec x}^T = x_1 \land x_2 \land x_3 \land x_4$. 
%Based on our running example, 
$x_1 \wedge x_4$ and $x_2 \wedge x_3 \wedge x_4$ are the sufficient reasons for $\vec x$ given $T$.
$x_1 \wedge x_4$ is the unique minimal sufficient reason for $\vec x$ given $T$.
$x_4$, $x_1 \wedge x_2$, and $x_1 \wedge x_3$ are the contrastive explanations for $\vec x$ given $T$.
Thus, the instance $(1, 1, 1, 0)$ 
%, 1)$ 
that differs with $\vec x$ only on $x_4$ is not classified by $T$ as $\vec x$ is ($(1, 1, 1, 0)$ %, 1)$ 
is classified as a negative instance).
\end{example}

%\pierre{expliquer comment calculer des contrastives, voir papier de Joao.}

We mention in passing that when dealing with decision trees $T$, we could have focused only on explanations
%the direct reasons, the sufficient reasons, and the contrastive explanations 
for the {\it positive} instances $\vec x$ given $T$. This comes from the fact that $\dt_n$ is closed under negation,
in the sense that for any $T \in \dt_n$, $\neg T$ can be obtained by just replacing from $T$ the label of each leaf with its complement. 
So, for any instance $\vec x \in \{0,1\}^n$, a direct reason (resp. sufficient reason, minimal sufficient reason, contrastive explanation) explaining why $T(\vec x) = 0$ 
is precisely the same as a direct reason (resp. sufficient reason, minimal sufficient reason, contrastive explanation) explaining why $(\neg T)(\vec x) = 1$. Considering $T$
or its negation $\neg T$ has no computational impact since $\neg T$ can be computed in time linear in the size of $T$.

%%%%%%%%%%%%%%%%%%%%%%%%%%%%%%%%%%%%%%%%%%%%%%%%%%%%%%%%%%%%%%%%%%%%%%%%%%%%%%%%
\section{Computing All Sufficient Reasons}\label{all}
%%%%%%%%%%%%%%%%%%%%%%%%%%%%%%%%%%%%%%%%%%%%%%%%%%%%%%%%%%%%%%%%%%%%%%%%%%%%%%%%

\paragraph{Sufficient reasons can be exponentially numerous.}
When switching from the direct reason for an instance (that is unique but not always redundancy-free) to its sufficient reasons, 
a main obstacle to be dealt with lies in the number of reasons to be considered.
%However, computing all the sufficient reasons for an instance turns out to be a much more difficult task than computing a single sufficient reason.
Indeed, even for the restricted class of decision trees with logarithmic depth, an input instance can have exponentially many sufficient reasons:

\begin{proposition}\label{prop:nbDT}
There is a decision tree $T \in\dt_n$ of depth $\log_2(n + 1)$ 
 such that for any $\vec x \in \{0,1\}^n$, the number of sufficient reasons for $\vec x$ given $T$ is at least 
 $\lfloor \frac{3}{2}^{\frac{n + 1}{2}} \rfloor$. 
\end{proposition}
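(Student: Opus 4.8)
The plan is to design $T$ recursively and to track the number of sufficient reasons of an \emph{arbitrary} instance through a recurrence on the depth. Writing $d=\log_2(n+1)$, so that $n+1=2^d$ and $(n+1)/2=2^{d-1}$, the target is $\lfloor(3/2)^{2^{d-1}}\rfloor$. This is exactly the value produced by a recurrence of the form $N_d \ge N_{d-1}(N_{d-1}+1)$ with $N_1=1$: indeed $N_d\ge N_{d-1}^2\ge\bigl((3/2)^{2^{d-2}}\bigr)^2=(3/2)^{2^{d-1}}$ once $N_{d-1}$ is past the first few values, the small cases ($d\le 2$) being absorbed by the floor. So the whole argument reduces to exhibiting a family $(T_d)$ for which every instance has at least $N_d$ sufficient reasons, with $N_d$ satisfying this doubly-exponential recurrence.

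First I would fix the recursive shape of $T_d \in \dt_n$. Its root tests a fresh variable $x$, and the two children are variable-disjoint copies of $T_{d-1}$ built on the two halves of the remaining $n-1 = 2(2^{d-1}-1)$ variables; this keeps the depth at $d=\log_2(n+1)$ and uses each variable once. Denoting by $f_A$ and $f_B$ the functions of the two copies (on disjoint blocks $\vec x_A,\vec x_B$), the tree realises $f_{T_d}=(\overline x\wedge f_A)\vee(x\wedge f_B)$.

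Next I would count the sufficient reasons of a generic instance $\vec x=(x,\vec x_A,\vec x_B)$ from those of the subtrees, using conditioning. Conditioning $f_{T_d}$ by $\overline x$ (resp. $x$) returns $f_A$ (resp. $f_B$), so a prime implicant covering $\vec x$ is of exactly one of two kinds: either it fixes the root literal and then restricts to a sufficient reason of the active subtree — an additive, $N_{d-1}$-type contribution — or it mentions neither occurrence of $x$, in which case implying $f_{T_d}$ forces it to imply both $f_A$ and $f_B$, so it is the conjunction of a prime implicant of $f_A$ covering $\vec x_A$ with one of $f_B$ covering $\vec x_B$ — a contribution equal to the \emph{product} of the two subtree counts. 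Checking that each such conjunction is genuinely prime is routine, since dropping a literal falsifies the corresponding conditioning. Summing the two kinds yields the recurrence $N_d\ge N_{d-1}(N_{d-1}+1)$, and solving it by induction (with the floor handling $d\le 2$) gives the stated bound.

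The hard part will be guaranteeing the multiplicative (cross) term for \emph{every} $\vec x$, not merely for a distinguished instance such as the all-ones vector. The product term appears only when the block that does \emph{not} determine $T_d(\vec x)$ is itself satisfied in the required sense; if that block evaluates the ``wrong'' way, the product collapses and the recurrence degrades to the useless $N_d\ge N_{d-1}$, which cannot grow. This is precisely the gap between an existential statement over instances and the universal statement at hand, and it is the step I expect to be the main obstacle. The natural attempt to close it is to choose the two copies and their leaf labels symmetrically — so that $f_B$ mirrors $f_A$ under complementation — and to carry an inductive invariant asserting that at \emph{every} point \emph{both} $f_{T_{d-1}}$ and $\neg f_{T_{d-1}}$ admit at least $N_{d-1}$ sufficient reasons, so that no instance is ever starved of cross prime implicants whatever the root literal and whatever the class of $\vec x$. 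Establishing such a two-sided invariant and verifying that it genuinely survives the recursion for \emph{all} instances — rather than only for those on which both blocks happen to cooperate — is the delicate crux on which the uniform bound rests, and is where the bulk of the technical work will lie.
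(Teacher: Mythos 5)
You have reproduced the paper's approach in its entirety up to the crux: the paper also takes the complete binary tree of depth $k$ over $n=2^k-1$ pairwise distinct variables (one $0$-leaf and one $1$-leaf under each bottom node), and proves the bound by exactly your recurrence $\sigma(\vec x,1)=1$, $\sigma(\vec x,d+1)=\sigma(\vec x,d)(\sigma(\vec x,d)+1)$, solved via the doubly exponential sequence $a(d+1)=a(d)^2+a(d)$ with $a(d)=\lfloor c^{2^{d-1}}\rfloor$, $c\sim 1.59791 \geq 3/2$. But your attempt stops short of a proof at precisely the point you flag, and the repair you sketch cannot succeed. A sufficient reason for $\vec x$ given $f$ is a covering prime implicant of $f$ or of $\neg f$ \emph{according to the class} $f(\vec x)$; since a term covering $\vec x$ can imply only the function of the class that $\vec x$ actually belongs to, at any fixed instance at most one of $f_{T_{d-1}}$ and $\neg f_{T_{d-1}}$ admits covering prime implicants, so your ``two-sided invariant'' is unsatisfiable as stated. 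Worse, it would not help even if it held: when the inactive block $f_o$ disagrees with the target class at $\vec x$ (say the target is $1$ while $f_o(\vec x)=0$), every implicant of the whole tree that omits the root variable must imply $f_o$ itself, and no term covering $\vec x$ does; hence the cross term is empty regardless of how many covering prime implicants $\neg f_o$ possesses, and mirroring the two copies under complementation changes nothing, because two nonconstant functions on disjoint variable sets can never agree in class on all instances. So your mechanism delivers only the degenerate $N_d\ge N_{d-1}$ at mismatched nodes, exactly the collapse you feared.

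By contrast, the paper's proof introduces no invariant at all: after supposing w.l.o.g.\ that the path of $\vec x$ goes through the left child, it asserts ``by construction'' the set identity $s(\vec x,T')=\{t_l\land t_r: t_l\in s(\vec x,T'_l),\, t_r\in s(\vec x,T'_r)\}\cup\{l_i\land t_l: t_l\in s(\vec x,T'_l)\}$, i.e., it takes the multiplicative term for granted in one line; implicitly, that identity is valid exactly in the configuration your analysis isolates, namely when the sibling subtree classifies $\vec x$ the same way as the tree does. You have therefore correctly located the single step on which the universal quantification over instances hinges (and which the paper disposes of without argument), but locating the obstacle is not overcoming it: your proposal supplies neither a proof of the recurrence for all $\vec x$ nor a workable mechanism for one. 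As written, your argument establishes the recurrence only for instances on which every sibling subtree agrees with the overall classification (e.g., the all-ones instance under a suitable leaf labelling), which yields the existential version of the statement, not the universal one claimed.
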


%This result easily extends to the case of comprehensible reasons (just set $I$ to $X_n$) and to the case of inclusion-preferred reasons (consider $p = 1$ and $C_1 = X_n$).

By definition, the minimal sufficient reasons for $\vec x$ given $T$ cannot be more numerous than its sufficient reasons. However, 
focusing on minimal sufficient reasons does not solve the problem since an instance can also have exponentially many minimal sufficient reasons:

\begin{proposition}\label{prop:nbminDT}
For every $n \in \mathbb{N}$ such that $n$ is odd, there is a decision tree $T \in\dt_n$ of depth $\frac{n+1}{2}$ such that $T$ contains $2n+1$ nodes
and there is an instance $\vec x \in \{0,1\}^n$ such that the number of minimal sufficient reasons for $\vec x$ given $T$ is equal to
 $2^{\sqrt{n-1}}$. 
\end{proposition}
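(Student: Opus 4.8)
The plan is to reduce everything to a transversal (hitting set) problem. Take $\vec x$ to be the all-ones instance and assume $T(\vec x)=1$. A term $t\subseteq t_{\vec x}$, i.e.\ a set of positive literals, is an implicant of $T$ exactly when it is inconsistent with every root-to-leaf path ending in a $0$-leaf; since $\vec x$ is all-ones, $t$ is inconsistent with such a path $\pi$ iff $t$ contains some variable that $\pi$ fixes to $0$. Writing $Z_\pi$ for the set of variables set to $0$ along a $0$-path $\pi$, the sufficient reasons for $\vec x$ given $T$ are precisely the minimal (inclusion-wise) transversals of the family $\{Z_\pi : \pi \text{ a } 0\text{-path}\}$, and the \emph{minimal} sufficient reasons are its minimum-cardinality transversals. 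Thus I would recast the statement as: exhibit a decision tree whose $0$-paths induce a set family with exactly $2^{\sqrt{n-1}}$ minimum transversals, while meeting the prescribed size and depth.

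Next I would build a core tree that produces $k=\sqrt{n-1}$ pairwise disjoint two-element zero-sets, which forces exactly $2^{k}$ minimum transversals (one free choice per pair). Concretely, chain $k$ gadgets along a single accepting spine: gadget $j$ tests $x_j$, routing $x_j=1$ to the next gadget and $x_j=0$ to a test of $w_j$, with $w_j=1$ leading to a $1$-leaf and $w_j=0$ to a $0$-leaf. The unique $0$-path through gadget $j$ sets exactly $x_j$ and $w_j$ to $0$, so $Z_\pi=\{x_j,w_j\}$; the $k$ pairs being disjoint, every transversal has size at least $k$, and the size-$k$ transversals are exactly the $2^{k}$ sets obtained by picking one literal from each pair. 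The essential trick is that sending $w_j=1$ to a $1$-leaf (instead of back to gadget $j-1$) keeps the tree read-once and of linear size, sidestepping the exponential duplication that an honest conjunction $\bigwedge_j(x_j\lor w_j)$ would force in any decision tree.

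It then remains to inflate this core to the exact parameters $n=k^{2}+1$ variables, depth $(n+1)/2$, and $2n+1$ nodes, without perturbing the zero-set family. I would use padding variables chosen so that they never occur in any $Z_\pi$, hence never enter a minimal transversal: ``free'' variables inserted on the spine as tests $v$ with $v=0$ going straight to a $1$-leaf and $v=1$ continuing, which only lengthen the accepting path, and analogous chains hung off the gadget $1$-leaves, which sit on purely accepting branches and so raise the variable and node counts without lengthening the longest path. Since the tree stays read-once, the full-binary-tree identity (internal nodes $=n$, leaves $=n+1$) yields $2n+1$ nodes automatically, and placing about $((k-1)^2+1)/2$ free variables on the spine and the remaining $((k-1)^2-1)/2$ off it makes the longest path use exactly $(n+1)/2$ of the $n=k^2+1$ variables.

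The main obstacle is matching all three exact quantities at once. Obtaining $2^{\sqrt{n-1}}$ reasons is immediate once the zero-sets are $k$ disjoint pairs, and the node count is forced by the read-once/full-binary-tree count, but pinning the depth to exactly $(n+1)/2$ is delicate: only about half of the $k^{2}$ variables may lie on the longest path, so the padding cannot all be stacked on one spine (that would force depth $\approx k^{2}$), and I must split it carefully between depth-increasing and depth-neutral positions and check the resulting integer arithmetic (this is where the parity of $k$, equivalently $n$ odd with $n-1$ a perfect square, is used). A second point to verify is that the padding creates no unexpected minimum transversal: because padding variables never appear in any $Z_\pi$, no transversal of size below $k$ arises and no size-$k$ transversal other than the $2^{k}$ one-per-pair choices is introduced, so the count is exactly $2^{\sqrt{n-1}}$.
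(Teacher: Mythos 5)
Your proposal is correct in substance, and its core coincides with the paper's construction: your gadget chain (test $x_j$, continue on the $1$-branch, test $w_j$ on the $0$-branch) is exactly the caterpillar tree of Figure~\ref{fig:dt}, and in both proofs the count comes from the fact that the complete reason of the all-ones instance is a conjunction of pairwise disjoint clauses $x_j \vee w_j$, whose prime implicants are precisely the one-literal-per-clause choices. What differs is, first, the justification: your hitting-set reduction (a positive term is an implicant iff it meets the zero-set of every $0$-path) is more elementary and self-contained than the paper's route, which goes through a general recursive characterization of $\mathit{sr}(\vec x, f)$ via Shannon decomposition (Lemma~\ref{prop:computingallSR}) and then inducts on the depth of the tree; your argument is a perfectly rigorous substitute. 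Second, and more significantly, the exponent: the paper uses no padding, takes $n = 2k-1$ (your chain with $k-1$ gadgets, except that its spine ends in one forced test $w$ rather than a $1$-leaf), and obtains $2^{k-1} = 2^{(n-1)/2}$ minimal sufficient reasons for \emph{every} odd $n$; its closing rewriting of $2^{k-1}$ as $2^{\sqrt{n-1}}$ is an algebra slip ($k-1 = (n-1)/2 \neq \sqrt{n-1}$ unless $n = 5$), i.e., the stated exponent is best read as a typo for $(n-1)/2$. You instead took $2^{\sqrt{n-1}}$ at face value and spent $(k-1)^2$ padding variables to hit it exactly; your bookkeeping does work out (padding creates no new $0$-paths, so the reason set is untouched; $n$ read-once internal nodes force $2n+1$ nodes; $((k-1)^2+1)/2$ spine variables yield depth exactly $(n+1)/2$ while the off-spine chains stay strictly shorter), but, as you half-acknowledge yourself, this only covers odd $n$ with $n-1$ a perfect square (equivalently $k$ even) --- unavoidably so, since otherwise $2^{\sqrt{n-1}}$ is not even an integer, yet it means you do not prove the claim ``for every odd $n$'' as stated. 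In short, you prove the literal statement on the only values of $n$ where it can hold, while the paper's unpadded caterpillar proves the intended and much stronger bound $2^{(n-1)/2}$ for all odd $n$; if you drop your padding and end the spine with one forced test, your transversal argument yields exactly that stronger result.
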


%\color{blue}

%The proof of this proposition considers a decision tree $T\in \dt_n$ with $n$ decision nodes and $n+1$ leaves, thus it contains $2n+1$ nodes. 
%Considering $\vec x$ and $T$ as the input of an algorithm for computing all the minimal reasons for $\vec x$ given $T$,
%the size of the input is $s = 3n+1$, and the number of minimal reasons for the chosen $\vec x$ given $T$ is equal to
%$2^{\frac{s-4}{6}}$, thus exponential in $s$. Clearly enough, 

%Since minimal sufficient reasons are sufficient reasons, Proposition \ref{prop:nbminDT} also precludes the existence of a polynomial-time 
%algorithm for computing all the sufficient reasons for an instance given a decision tree.

In many practical cases, the number of sufficient reasons for an instance given a decision tree can be very large.
Figure \ref{fig:reasons} (top) shows an {\tt mnist} instance (the leftmost subfigure) that has 482 185 073 664 sufficient reasons. 
%What makes it problematic is that in this set one can find 
Among them there are very dissimilar sufficient reasons. As an illustration, the two rightmost subfigures
present two sufficient reasons for this instance, and they differ on many features (blue (resp. red) dots correspond to pixels on (resp. off)).

\begin{figure}[t]
    \centering
    \includegraphics[width=0.6\textwidth]{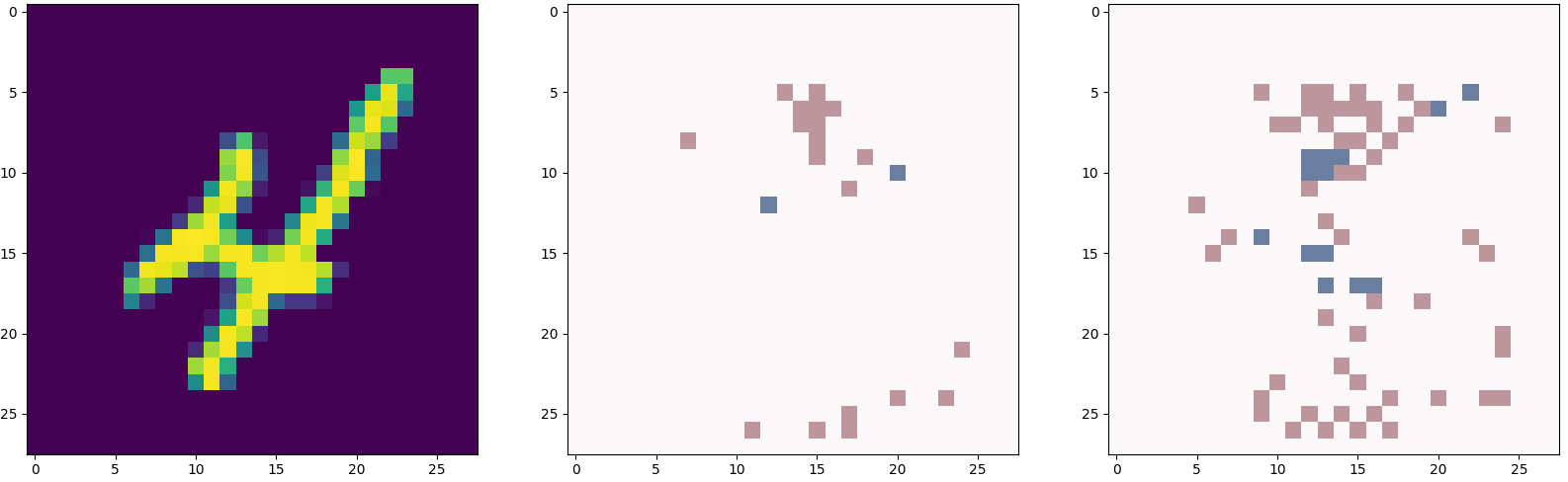}
    \includegraphics[width=0.6\textwidth]{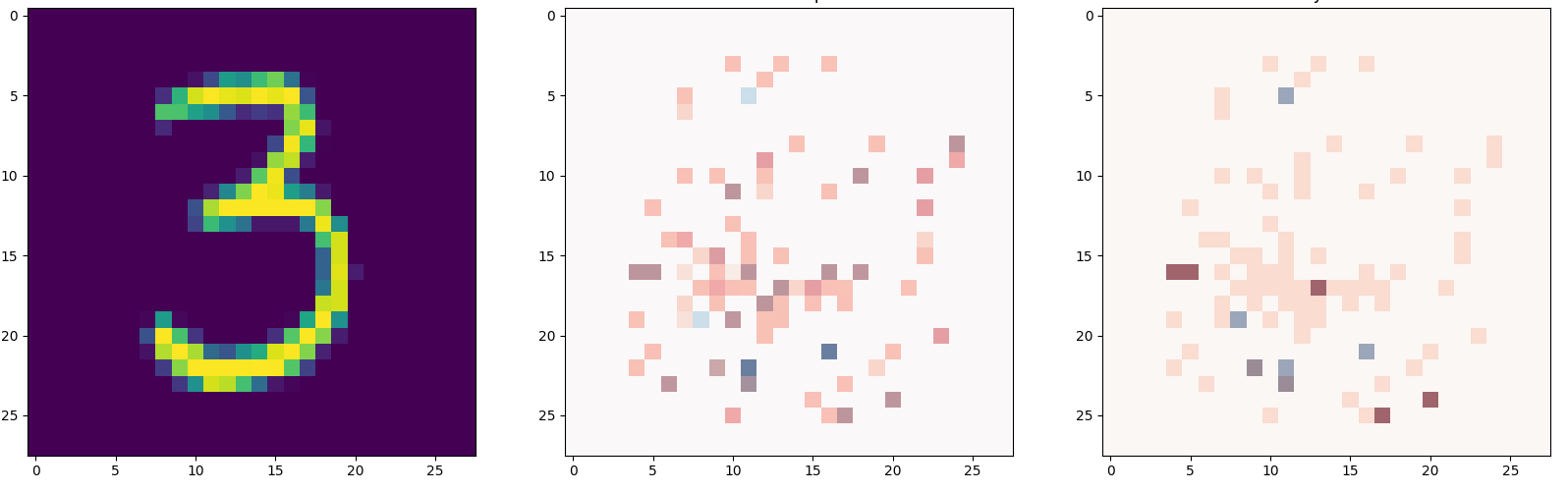}
     \caption{\label{fig:reasons}Two sufficient reasons for an {\tt mnist} instance (top), and an explanatory heat map and the explanatory features for an {\tt mnist} instance (bottom). % (on the left). %This instance has 482185073664 different sufficient reasons.
     }
\end{figure}

For such datasets, computing the set of all the sufficient reasons for a given instance is not always feasible.
Furthermore, if the computation succeeds but the number of sufficient reasons is huge, their (disjunctively interpreted) set, alias the complete reason for the instance \cite{DarwicheHirth20},
can hardly be considered as intelligible by the explainee. Finally, due to the number of sufficient reasons and their diversity, deriving one of them is not informative enough.
Thus, one needs to design approaches to synthesizing their set while avoiding the two pitfalls (the computational one and the informational one).

\paragraph{Synthesizing the set of sufficient reasons.}
%%In the remaining case when no model of the explainee is available, computing all the sufficient reasons for $\vec x$ given $f$ can be viewed as a better option.
%Clearly enough, the approach consisting in computing all the sufficient reasons as presented in Section \ref{all} suffers from two important drawbacks.
%%However, this may raise two issues. 
%On the one hand, it can be the case that the computation of all the sufficient reasons fails because the number of 
%sufficient reasons for $\vec x$ given $f$ is too large. On the other hand, if the computation succeeds, but the 
%number of sufficient reasons for $\vec x$ given $f$ is still huge, their (disjunctively interpreted) set, alias the complete reason for $\vec x$ given $f$ \cite{DarwicheHirth20},
%can hardly be considered as intelligible by the explainee.

In this objective, the following notions of {\em necessary / (ir)relevant features} appear useful.
These notions of necessity and relevance echo the ones that have been considered in \cite{EiterGottlob95} for logic-based abduction.

%are useful to characterize respectively
%the most important and the less important (positive and negative) features for the set of explanations under consideration:

\begin{definition}[Explanatory Features] 
    Let $f \in \mathcal F_n$, and $\vec x \in \{0,1\}^n$ be an instance. % such that $f(\vec x) = 1$.
    Let $e$ be an explanation type.\footnote{For instance, $e$ can be $s$ when the sufficient reasons for $\vec x$ given $f$ are targeted or $c$ when the contrastive explanations 
    for $\vec x$ given $f$ are targeted.}
    \begin{itemize}
	\item A literal $\ell$ over $X_n$ is a {\em necessary feature} for the family $e$ of explanations for $\vec x$ given $f$ if and only if $\ell$ belongs to every
	explanation $t$ for $\vec x$ given $f$ such that $t$ is of type $e$. $\mathit{Nec}_e(\vec x, f)$ denotes the set of all necessary features 
	for the family $e$ of explanations for $\vec x$ given $f$.
	\item A literal $\ell$ over $X_n$ is a {\em relevant feature} for the family $e$ of explanations for $\vec x$ given $f$ if and only if $\ell$ belongs to at least one
	explanation $t$ for $\vec x$ given $f$ such that $t$ is of type $e$. 
	%A literal $\ell$ over $X_n$ is an {\em irrelevant feature} for the class $e$ of explanations 
	%for $\vec x$ given $f$ if and only if $\ell$ does not belong to any explanation $t$ for $\vec x$ given $f$ such that $t$ is of type $e$.
	$\mathit{Rel}_e(\vec x, f)$ denotes the set of all relevant features for the family $e$ of explanations for $\vec x$ given $f$.
	$\mathit{Irr}_e(\vec x, f)$, which is the complement of $\mathit{Rel}_e(\vec x, f)$ in the set of all literals over $X_n$, denotes the set of all irrelevant features for the family $e$ of explanations for $\vec x$ given $f$.
    \end{itemize}
\end{definition}
%
%\begin{example}
%On the running example, we have $\mathit{Nec}_s(\vec x, T) = \{x_4\}$, and $\mathit{Rel}_s(\vec x, T) = \{x_1, x_2, x_3, x_4\}$.
%%Thus, we have $\mathit{Irr}_s(\vec x, T) = \{\overline{x_1}, \overline{x_2}, \overline{x_3}, \overline{x_4}, x_5, \overline{x_5}\}$.
%\end{example}

%Computing $\mathit{Nec}_s(\vec x, f)$ and $\mathit{Irr}_s(\vec x, f)$ provides some valuable information about the features that are useful
%for explaining why $\vec x$ has been classified as such by $f$. 
%On the one hand, the necessary features for the class $s$ of sufficient reasons for $\vec x$ given $f$ are the most important features for explaining the classification of
%$\vec x$ by $f$, since they belong to every sufficient reason for $\vec x$ given $f$. On the other hand, the irrelevant features for the class $e$ of sufficient reasons 
%for $\vec x$ given $f$ are the less important features for explaining the classification of $\vec x$ by $f$, since they belong to no sufficient reason for $\vec x$ given $f$. 
%They are irrelevant for the explanation task.

The necessary (resp. irrelevant) features for the family $s$ of sufficient reasons for $\vec x$ given $f$ are the most (resp. less) important features for explaining the classification of
$\vec x$ by $f$, since they belong to every (resp. no) sufficient reason for $\vec x$ given $f$. Clearly enough, the notions of relevant/irrelevant feature considered here must not be confused with the ones defined in \cite{Audemardetal20}. Indeed, in the latter paper, the (ir)relevance of a feature is defined for a class, not for an explanations (thus, an explanatory feature can be relevant for a given positive instance $\vec x$ and irrelevant for another positive instance $\vec x'$). 
%They are irrelevant for the explanation task.

%Computing $\mathit{Nec}(\vec x, f)$ and $\mathit{Irr}(\vec x, f)$ provides some valuable information about the features that are useful
%for explaining why $\vec x$ has been classified as such by $f$. Those belonging to $\mathit{Nec}(\vec x, f)$ are the most important ones
%because they must be present in any explanation. Those outside $\mathit{Rel}(\vec x, f)$ 

When a single sufficient reason $t$  for $\vec x$ given $f$ has been computed, the cardinality of $t$ deprived from the features of $\mathit{Nec}_s(\vec x, f)$
is small, and  the cardinality of the symmetric difference between $t$ and $\mathit{Rel}_s(\vec x, f)$ is small as well, 
$t$ can be viewed as a good representative of the complete reason for $\vec x$ given $f$ in the sense that a sufficient reason $t'$ for $\vec x$ given $f$
that differs a lot from $t$ cannot exist.

In the case when $f$ is a decision tree $T$, though the set of all sufficient reasons for $\vec x$ given $T$ cannot be generated when it is too large,
$\mathit{Nec}_s(\vec x, f)$,  $\mathit{Rel}_s(\vec x, f)$, and $\mathit{Irr}_s(\vec x, f)$ can be derived efficiently: 
%computed in time polynomial in the input size:

\begin{proposition} \label{prop:explanatoryDT}
    Let $T \in \dt_n$, and $\vec x \in \{0,1\}^n$. 
    Computing $\mathit{Nec}_s(\vec x, T)$, $\mathit{Rel}_s(\vec x, f)$, and $\mathit{Irr}_s(\vec x, T)$ can be done in $\mathcal O((n +\size{T}) \times \size{T})$ time.
\end{proposition}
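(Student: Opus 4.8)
The plan is to translate the whole statement into a transversal (minimal hitting set) problem, after which the three sets can be read off by scanning and by a single extra tree traversal per leaf.

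First, since $\dt_n$ is closed under negation and $\neg T$ is obtained in $\mathcal{O}(\size{T})$ time, I would assume without loss of generality that $T(\vec x)=1$; then the sufficient reasons for $\vec x$ given $T$ are exactly the prime implicants of $T$ that are subsets of the minterm $t_{\vec x}$. The key lemma I would establish is a covering dictionary. Compute $\cnf(T)=\bigwedge_j c_j$ in linear time; each clause $c_j$ negates a root-to-leaf path ending in a $0$-leaf, so there are at most $\size{T}$ of them. For each clause set $a_j := c_j \cap t_{\vec x}$, the literals of $c_j$ consistent with $\vec x$ (equivalently, the variables at which $\vec x$ \emph{deviates} from path $j$); because $T(\vec x)=1$, every $a_j$ is nonempty. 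I would then prove that a term $t \subseteq t_{\vec x}$ is an implicant of $T$ iff $t$ meets every $a_j$: indeed $t \models c_j$ iff the consistent term $t$ and the clause $c_j$ share a literal, and since $t\subseteq t_{\vec x}$ the only literals of $c_j$ that $t$ can contain are those of $a_j$. Hence the sufficient reasons for $\vec x$ are precisely the minimal transversals of the family $\mathcal{A}=\{a_j\}$, read as index sets (the polarity being pinned by $\vec x$).

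With this dictionary I would characterize the three sets and prove each direction. A wrong-polarity literal $x_i^{1-x_i}$ occurs in no reason (reasons cover $\vec x$), so it is always irrelevant. A literal $x_i^{x_i}$ is \emph{necessary} iff it lies in every transversal, which holds iff $\{i\}$ is itself a member of $\mathcal{A}$, i.e.\ iff some $0$-leaf deviates from $\vec x$ at $x_i$ only (equivalently, iff flipping $x_i$ alone already changes the class). A literal $x_i^{x_i}$ is \emph{relevant} iff $i$ lies in some inclusion-minimal member of $\mathcal{A}$: for ``$\Leftarrow$'', if $a_j$ is minimal and $i\in a_j$, then $a_j\setminus\{i\}$ contains no member of $\mathcal{A}$, so it extends to a transversal that is minimal through $i$; for ``$\Rightarrow$'', a minimal transversal containing $i$ must have a member of $\mathcal{A}$ forcing $i$, which (by minimality) is itself minimal. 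I would stress that minimality is genuinely needed — a tiny example ($T\equiv x_2$ written with an idle $x_1$) exhibits a variable occurring in a non-minimal $a_j$ yet in no reason. Finally $\mathit{Irr}_s$ is the complement.

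For the algorithm, $\cnf(T)$ and the deviation sets $a_j$ (with their cardinalities) are produced by one DFS of $T$ in $\mathcal{O}(\size{T})$ time, storing them in $\mathcal{O}(n\,\size{T})$ space; $\mathit{Nec}_s$ is then collected as the singleton $a_j$'s. The main obstacle, which I expect to be the delicate part, is computing $\mathit{Rel}_s=\bigcup\{a_j : a_j \text{ inclusion-minimal}\}$ without the naive quadratic-in-$\size{T}$ (times $n$) pairwise comparison of deviation sets. My plan is a single \emph{minimality traversal} per $0$-leaf $\lambda$ with deviation set $D$: after marking the variables of $D$ in an $\mathcal{O}(n)$-array, traverse $T$ following $\vec x$ at every node whose label is outside $D$ and branching both ways at every node labelled in $D$; this visits a connected subtree of $T$, hence at most $\size{T}$ nodes. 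Tracking the number of deviations along the current path, $D$ fails to be minimal iff this traversal reaches a $0$-leaf whose deviation count is strictly below $\card{D}$ (a proper subset of $D$ is then also a flip set). If $D$ passes, I add its variables to $\mathit{Rel}_s$. Each leaf costs $\mathcal{O}(n+\size{T})$, and there are at most $\size{T}$ such leaves, so $\mathit{Rel}_s$ — and then $\mathit{Irr}_s$ by complement — is obtained within $\mathcal{O}((n+\size{T})\times\size{T})$ time, matching the claim; the crux of the proof is arguing the correctness of this single-pass minimality test.
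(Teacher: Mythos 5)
Your proposal is correct, and it is built on the same central object as the paper's proof: the instance-restricted CNF $\{c \cap t_{\vec x} : c \in \cnf(T)\}$, which is monotone, and whose inclusion-minimal clauses determine everything (relevant literals are exactly those occurring in an inclusion-minimal clause, necessary literals are exactly the singleton clauses). The two arguments diverge in how this is justified and exploited. The paper removes subsumed clauses so as to put the monotone CNF in prime implicates form, and then invokes Lit-dependence properties of the complete reason (Proposition 8 of \cite{LLM03}) to conclude that every literal of the reduced formula occurs in some sufficient reason; you instead prove the characterizations from scratch, via the hitting-set dictionary (sufficient reasons are exactly the minimal transversals of the deviation sets $a_j$), which makes the whole argument elementary and self-contained. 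Algorithmically you also depart from the paper: instead of eliminating subsumed clauses by pairwise comparison, you test minimality of each $a_j$ by a dedicated traversal of $T$ that follows $\vec x$ outside $D$ and branches both ways inside $D$, so that the reached $0$-leaves are precisely those whose deviation sets are contained in $D$, and a deviation count strictly below $|D|$ witnesses a proper subset. This test is correct, and it is a genuine point in your favor on the complexity side: the paper's subsumption step is claimed to run in "quadratic time in the size of $f$", but the total size of $f$ can be $\Theta(n\,\size{T})$, so naive pairwise subsumption costs about $\mathcal O(n\,\size{T}^2)$ and does not obviously fit the stated $\mathcal O((n+\size{T})\times \size{T})$ bound, whereas your per-leaf traversal meets it directly.

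One small repair is needed in the "$\Rightarrow$" direction of your relevance characterization. You assert that the member of $\mathcal A$ forcing $i$ in a minimal transversal $t$ (i.e., some $a_k$ with $a_k \cap t = \{i\}$) is "by minimality itself minimal"; that is not literally true, since a forcing member can be non-minimal. What you need is the one-line descent: pick any inclusion-minimal $a_m \subseteq a_k$; then $a_m \cap t$ is nonempty (because $t$ is a transversal) and contained in $a_k \cap t = \{i\}$, hence $a_m \cap t = \{i\}$, so $i$ lies in the inclusion-minimal member $a_m$. With that addition the equivalence, and hence the whole proof, is complete.
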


%Note that, as a direct by-product of Proposition \ref{prop:allContrastiveDT}, $\mathit{Nec}_c(\vec x, T)$, $\mathit{Rel}_c(\vec x, T)$, and $\mathit{Irr}_c(\vec x, T)$
%can be also be computed in time polynomial in the size of $\vec x$ plus the size of $T$.

Going a step further consists in evaluating the explanatory importance of every (positive or negative) feature:
%, measured as its frequency (the number of times it appears in an explanation, divided by the total number of explanations in the set):

\begin{definition}[Explanatory Importance] \label{def:importance}
   Let $f \in \mathcal F_n$, and $\vec x \in \{0,1\}^n$ be an instance. % such that $f(\vec x) = 1$.
   Let $e$ be an explanation type, and $E_e(\vec x, f)$ the set of all explanations for $\vec x$ given $f$ that are of type $e$.
   The {\em explanatory importance} of a literal $\ell$ over $X_n$ for $\vec x$ given $f$ w.r.t. $e$ is given by
   $$\mathit{Imp}_e(\ell, \vec x, f) = \frac{\#(\{t \in E_e(\vec x, f) : \ell \in t\})}{\#(E_e(\vec x, f))}.$$
\end{definition}

\begin{example}
On the running example, we have $\mathit{Nec}_s(\vec x, T) = \{x_4\}$, and $\mathit{Rel}_s(\vec x, T) = \{x_1,$ $x_2,$ $x_3,$ $x_4\}$. We also 
have $\mathit{Imp}_s(x_4, \vec x, T) = 1$, $\mathit{Imp}_s(x_1, \vec x, T) =$ $\mathit{Imp}_s(x_2, \vec x, T) =$ $\mathit{Imp}_s(x_3, \vec x, T)$ 
$= \frac{1}{2}$, and $\mathit{Imp}_s(\ell, \vec x, T) = 0$ for every other literal $\ell$ (the negative ones over $\{x_1, x_2, x_3, x_4\}$). % over $\{x_1, x_2, x_3, x_4, %x_5\}$.
\end{example}

The notion of explanatory importance must not be confused with the notions of feature importance (which can be defined and assessed in many different ways):
the former is local (i.e., relative to an instance) and not global, it concerns literals and not variables (polarity matters), and it is about the explanation task, not the prediction one.

In order to compute the explanatory importance of a literal, a straightforward approach consists in
enumerating the explanations of $E_e(\vec x, f)$. This is feasible when this set is not too large, which is not always the case for sufficient reasons even
when $f$ is a decision tree $T$.
%when the explanations at hand are the sufficient reasons 
%for $\vec x$ given $T$ (we will provide some evidence about the cardinality of $E_s(\vec x, T) = \mathit{sr}(\vec x, T)$ in Section \ref{experiments}). 
Thus, for dealing with the remaining case, an alternative approach must be looked for.

We designed such an approach for computing $\mathit{Imp}_s(\ell, \vec x, T)$. We know that $\mathit{sr}(\vec x, T)$ is by construction the set of prime implicants of
$g = \{c \cap t_{\vec x} : c \in \cnf(T)\}$. Thus, we exploited the translation presented in \cite{DBLP:conf/jelia/JabbourMSS14} 
showing how to associate in polynomial time with a given \cnf\ formula (here, $g$) another formula (over 
a distinct set of variables), let us say $h$, such that the models of $h$ are in one-to-one correspondence with the prime implicants of $g$. %$\{c \cap t_{\vec x} : c \in \cnf(T)\}$. 
In our case, the translation can be simplified because $g$
%$\{c \cap t_{\vec x} : c \in \cnf(T)\}$ 
is a monotone \cnf\ formula. %\pierre{Sans doute à détailler plus ...} 
Since $h$ is not primarily a \cnf\ formula, leveraging Tseitin transformation \cite{Tseitin68}, we turned $h$ in linear time into a query-equivalent \cnf\ formula $i$. Note that every
auxiliary variable that is introduced in $i$ is defined from the other variables (those occurring in $h$), so that the number of models of $i$ is the same as the number of models of $h$.
Finally, we took advantage of the compilation-based model counter \d4\ \cite{LagniezMarquis17} to compile $i$ into a \dDnnf\ circuit \cite{Darwiche01},
and this enabled us to compute in time polynomial in the size of $i$ both the number of sufficient reasons and the explanatory importance of every literal
(indeed, the  \dDnnf\ language supports in polytime the model counting query and the conditioning transformation \cite{DarwicheMarquis02}).
%once conditioned by the term encoding the literal $\ell$.
%By construction, $\mathit{Imp}_e(\ell, \vec x, T)$ is equal to this number divided by  the number of models of $g$ (which is equal to the number of sufficient reasons for $\vec x$ given $T$). 
We show in Section \ref{experiments} that, despite a high complexity in the worst case (the size of $i$ can be exponential in $\size{T}$), 
this approach based on knowledge compilation proves quite efficient in practice.

Clearly enough, when $\mathit{Imp}_e(\ell, \vec x, T)$ has been computed for every $\ell$, one can easily generate explanatory heat maps.
%Figure \ref{fig:heat} 
Figure \ref{fig:reasons} (bottom) shows an {\tt mnist} instance (the leftmost subfigure) that has 19 115 685 sufficient reasons, 6 necessary literals, and 94 relevant literals.
%le 3-2 associé à la heatmap et les necessary 86432256 et 19 et 46 lits
The central subfigure is the corresponding heat map. Blue (resp. red) pixels correspond to positive (resp. negative) literals in the instance, and
the intensity of the color aims to reflect the explanatory importance of the corresponding literal. The rightmost subfigure gives the explanatory
features (dark pixels are associated with necessary literals, and light pixels to relevant literals).

%\begin{figure}[t]
%    \centering
%    \includegraphics[width=0.6\textwidth]{images/tree-heat-nec.png}
%     \caption{\label{fig:heat}Explanatory heat map and explanatory features for an MNIST38 instance.} %This instance has 482185073664 different sufficient reasons.
%%     Blue (resp. red) dots correspond to pixels on (resp. off).}
%\end{figure}

%\pierre{Expliquer ici comment on peut compter les raisons suffisantes}

\paragraph{Enumerating the minimal sufficient reasons.}\label{enumerating}

An approach to synthesizing the set of sufficient reasons consists in focusing on the minimal ones.
Indeed, though the set of minimal sufficient reasons for an instance given a decision tree can be exponentially large, the number of minimal sufficient reasons 
cannot exceed the number of sufficient reasons, and it can be significantly lower in practice.

However, unlike sufficient reasons that can be generated in polynomial time using a greedy algorithm (see e.g., \cite{DBLP:journals/corr/abs-2010-11034}), 
computing minimal reasons is not an easy task, as shown in \cite{DBLP:conf/nips/BarceloM0S20}:\footnote{Thanks to Bernardo Subercaseaux for pointing out this paper.}
%. To be more precise, unless {\sf P = NP}, computing a minimal sufficient reason 
%for $\vec x$ given $T$
%is more demanding than computing a sufficient reason:

\begin{proposition}\label{prop:minimalreasonDT} 
    Let $T \in \dt_n$ and $\vec x \in \{0,1\}^n$. % such that $T(\vec x) = 1$. 
     Computing a minimal sufficient reason for $\vec x$ given $T$ is {\sf NP}-hard.
\end{proposition}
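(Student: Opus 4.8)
The plan is to reduce \emph{Minimum Vertex Cover}, which is {\sf NP}-hard, to the problem of computing a minimal sufficient reason for a positive instance given a decision tree. First I would recast sufficient reasons as minimum hitting sets. Fix $\vec x$ with $T(\vec x) = 1$. Since $\cnf(T)$ is a conjunction of clauses, one per root-to-leaf path ending in a $0$-leaf (each clause being the negation of that path term), a term $t \subseteq t_{\vec x}$ is an implicant of $T$ if and only if $t \models c$ for every such clause $c$. As $t$ only contains literals of $t_{\vec x}$ and every $0$-leaf path is read-once (hence its negated clause is non-tautological), $t$ entails the clause of a $0$-leaf $j$ exactly when $t$ contains the $\vec x$-literal of some variable on which $\vec x$ disagrees with the path to $j$. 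Writing $D_j$ for this \emph{disagreement set}, the sufficient reasons for $\vec x$ given $T$ are precisely the minimal sub-terms of $t_{\vec x}$ whose variable set meets every $D_j$, and the minimal sufficient reasons are the minimum-cardinality such hitting sets. So it suffices to realise an arbitrary vertex-cover instance as the family $\{D_j\}$ of a polynomial-size tree.

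Next I would give the gadget. Given a graph $G = (V, E)$ with $V = \{1, \dots, n\}$, take $\vec x = \vec 1$, so that $t_{\vec x} = x_1 \wedge \cdots \wedge x_n$, and build a tree $T_G$ whose spine tests $x_1, \dots, x_n$ in order, always following the $1$-branch and ending in a $1$-leaf (this is the path of $\vec 1$). At the $0$-branch of the spine node testing $x_u$, I hang a chain over the neighbours $v > u$ of $u$ (a $1$-leaf if there are none): the chain follows the $1$-branch from one neighbour to the next, the $0$-branch of the test on $x_v$ leads to a $0$-leaf, and the surviving $1$-branch after the last neighbour leads to a $1$-leaf. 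Each $0$-leaf then corresponds to a unique edge $\{u, v\}$ with $u < v$, and its path disagrees with $\vec 1$ exactly on $x_u$ and $x_v$, so that $D_j = \{u, v\}$ and nothing more. The tree is read-once (the indices tested along any path strictly increase), has $O(n + m)$ nodes with $m = |E|$, and satisfies $T_G(\vec 1) = 1$ by construction.

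Finally I would argue correctness. By the characterization, a set $C \subseteq \{1, \dots, n\}$ gives an implicant $\bigwedge_{v \in C} x_v$ of $T_G$ covering $\vec 1$ if and only if $C$ meets every $D_j = \{u,v\}$, that is, if and only if $C$ is a vertex cover of $G$; hence the minimal sufficient reasons for $\vec 1$ given $T_G$ are exactly the minimum vertex covers, and the size of a minimal sufficient reason equals the vertex-cover number of $G$. An algorithm computing a minimal sufficient reason therefore reveals this number and decides whether $G$ admits a cover of size at most $k$, so the hardness of \emph{Vertex Cover} transfers. The main obstacle — and the thing the construction is designed to defeat — is to keep the tree polynomial-size while making the disagreement sets coincide \emph{exactly} with the edges: a detour through a generic CNF-to-tree translation can blow up exponentially, and any auxiliary routing variables would risk entering the $D_j$ and corrupting the bijection with vertex covers. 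The spine-and-chains gadget avoids both pitfalls by arranging that the single instance $\vec 1$ agrees with every $0$-leaf path on all variables but the two endpoints of the corresponding edge.
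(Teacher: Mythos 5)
Your proof is correct, but it necessarily takes a different route from the paper, because the paper contains no proof of Proposition \ref{prop:minimalreasonDT} at all: the result is imported by citation from \cite{DBLP:conf/nips/BarceloM0S20} (with a footnote crediting Subercaseaux for pointing out that reference), and nothing for it appears in the appendix. Your reduction is thus a self-contained substitute for that citation, and it checks out. The hitting-set recasting is sound: any $t \subseteq t_{\vec x}$ is a consistent term whose polarity on each variable is fixed by $\vec x$, and every clause of $\cnf(T)$ is non-tautological (paths are read-once), so $t$ entails the clause of a $0$-leaf exactly when $\Var{t}$ meets that leaf's disagreement set, and inclusion-minimality among sub-terms of $t_{\vec x}$ coincides with primality because every proper subset of such a $t$ is again a sub-term of $t_{\vec x}$. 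This is precisely the monotone-CNF viewpoint the paper itself exploits elsewhere (the set $\{c \cap t_{\vec x} : c \in \cnf(T)\}$ in the proofs of Propositions \ref{prop:minimal-weightoptim} and \ref{prop:allContrastiveDT}), so your argument fits the paper's machinery seamlessly. The spine-and-chains gadget is also correct: the $0$-leaf obtained by branching off the spine at $x_u$ and dying at chain node $x_v$ disagrees with $\vec 1$ exactly on $\{x_u, x_v\}$, indices strictly increase along every root-to-leaf path (so the tree is read-once), the tree has $O(n+m)$ nodes, and $T_G(\vec 1) = 1$; hence the minimal sufficient reasons for $\vec 1$ given $T_G$ are exactly the minimum vertex covers of $G$, and a polynomial-time procedure computing one would decide \textsc{Vertex Cover}. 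This is NP-hardness in the Turing-reduction sense, which matches the phrasing of the proposition (``computing \ldots is {\sf NP}-hard''). What your route buys: an elementary argument established directly for decision trees (the reader need not check how the construction in \cite{DBLP:conf/nips/BarceloM0S20}, developed for a broader setting, specializes to trees), plus the sharper observation that disagreement sets of size two --- i.e., side branches of depth one beyond the branching point --- already suffice for hardness. What the citation buys the paper: brevity, and the surrounding complexity landscape (e.g., NP-membership of the associated decision problem) established in that reference.
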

  
%A source of complexity that may explain this hardness result is the number of sufficient reasons for an input instance $\vec x$,
%which can be exponential in the size of the decision tree at hand (see the forthcoming Proposition \ref{prop:nbDT}).
Despite this intractability result, minimal sufficient reasons can be generated in many practical cases.
A common approach for handling {\sf NP}-optimization problems is to rely on modern constraint solvers.
One follows this direction here and casts the task of finding minimal sufficient reasons as a Boolean constraint optimization problem. 
We first need to recall that a \textsc{Partial MaxSAT} problem consists of a pair  $(C_{\mathrm{soft}}, C_{\mathrm{hard}})$
where $C_{\mathrm{soft}}$ and $C_{\mathrm{hard}}$ are (finite) set of clauses. The goal is to find a Boolean assignment that maximizes the number of
clauses $c$ in $C_{\mathrm{soft}}$ that are satisfied, while satisfying all clauses in $C_{\mathrm{hard}}$.

\begin{proposition}\label{prop:minimal-weightoptim}
    Let $T$ be a decision tree in $\dt_n$ and $\vec x \in \{0,1\}^n$ be an instance such that $T(\vec x) = 1$. 
 %   Let $w_{\mathit{min}} = \mathit{abs}(\mathit{min}(\{w(\ell) : \ell \mbox{ a literal over } X_n\})) + 1$.
    Let $(C_{\mathrm{soft}}, C_{\mathrm{hard}})$ be an instance of the \textsc{Partial MaxSAT} problem such that:
%    \begin{align*}
%        C_{\mathrm{soft}} &= \{\overline{x_i}  : x_i \in t_{\vec x}\} \cup \{x_i  : \overline x_i \in t_{\vec x}\} \\
%        C_{\mathrm{hard}} &= \{c_{\mid \vec x} : c \in \cnf(T) \} 
%    \end{align*}
%$$C_{\mathrm{soft}} = \{\overline{x_i}  : x_i \in t_{\vec x}\} \cup \{x_i  : \overline x_i \in t_{\vec x}\} \mbox{   and   } C_{\mathrm{hard}} = \{c_{\mid \vec x} : c \in \cnf(T) \}$$
%    where $c_{\mid \vec x} = c \cap t_{\vec x}$ is the restriction of $c$ to the literals in $t_{\vec x}$.
$$C_{\mathrm{soft}} = \{\overline{x_i}  : x_i \in t_{\vec x}\} \cup \{x_i  : \overline x_i \in t_{\vec x}\} \mbox{   and   } C_{\mathrm{hard}} = \{c \cap t_{\vec x} : c \in \cnf(T) \}.$$
%    where $c_{\mid \vec x} = c \cap t_{\vec x}$ is the restriction of $c$ to the literals in $t_{\vec x}$.

    The intersection of $t_{\vec x}$ with  $t_{\vec x^*}$ where $\vec x^*$ is an 
    optimal solution of $(C_{\mathrm{hard}}, C_{\mathrm{soft}})$, is a minimal sufficient reason for $\vec x$ given $T$.
\end{proposition}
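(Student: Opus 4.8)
The plan is to reduce the statement to the characterization already recalled in the text, namely that the sufficient reasons for $\vec x$ given $T$ are exactly the prime implicants of the monotone \cnf\ formula $g = \{c \cap t_{\vec x} : c \in \cnf(T)\}$, equivalently the set-inclusion-minimal terms $t \subseteq t_{\vec x}$ with $t \models g$. Since $C_{\mathrm{hard}} = g$, a minimal sufficient reason is nothing but a minimum-cardinality implicant of $g$ contained in $t_{\vec x}$, and the whole argument amounts to showing that the \textsc{Partial MaxSAT} objective computes precisely such an implicant.

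First I would set up a correspondence between Boolean assignments and subterms of $t_{\vec x}$ through an \emph{agreement set}. For any assignment $\vec y$, write $t_{\vec x} \cap t_{\vec y}$ for the set of literals on which $\vec x$ and $\vec y$ coincide; this is a subterm of $t_{\vec x}$, and since $t_{\vec x}$ fixes every variable there are exactly $n$ soft clauses, one per variable, each rewarding disagreement with $\vec x$. Hence the number of soft clauses of $C_{\mathrm{soft}}$ satisfied by $\vec y$ is exactly $n - |t_{\vec x} \cap t_{\vec y}|$, so maximizing the number of satisfied soft clauses over the feasible assignments is the same as minimizing $|t_{\vec x} \cap t_{\vec y}|$ over them.

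The crux is then to match feasible assignments with implicants of $g$, which hinges on a single observation: because every clause of $g$ consists only of literals drawn from $t_{\vec x}$, an assignment $\vec y$ satisfies a clause $c \cap t_{\vec x}$ of $g$ if and only if its agreement set $t_{\vec x} \cap t_{\vec y}$ contains a literal of that clause. From this I get both directions. On one hand, any feasible $\vec y$ (i.e.\ $\vec y \models g$) has an agreement set that is an implicant of $g$ contained in $t_{\vec x}$. On the other hand, given any implicant $t \subseteq t_{\vec x}$ of $g$, the assignment that agrees with $\vec x$ exactly on $t$ and disagrees everywhere else is feasible and has $t$ as its agreement set. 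This makes the minimum of $|t_{\vec x} \cap t_{\vec y}|$ over feasible $\vec y$ equal to the minimum size of an implicant of $g$ contained in $t_{\vec x}$; feasibility is nonvacuous because $T(\vec x) = 1$ forces $\vec x \models g$, so $t_{\vec x}$ is itself such an implicant.

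Putting the pieces together, an optimal solution $\vec x^*$ minimizes $|t_{\vec x} \cap t_{\vec x^*}|$ over feasible assignments, so its agreement set $t_{\vec x} \cap t_{\vec x^*}$ is an implicant of $g$ of minimum cardinality. A minimum-cardinality implicant is automatically prime, since any proper subterm that were still an implicant would be a strictly smaller one, contradicting minimality; hence it is a sufficient reason, and being of minimum size it is a \emph{minimal} sufficient reason, as claimed. The main obstacle I anticipate is the careful, polarity-sensitive bookkeeping in the agreement-set correspondence --- in particular verifying the ``satisfies a clause of $g$ iff the agreement set hits that clause'' equivalence and checking that the assignment built from an implicant is genuinely feasible --- since everything else follows cleanly once that equivalence is in place.
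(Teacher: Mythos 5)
Your proof is correct and follows essentially the same route as the paper's: the hard clauses force the agreement term $t_{\vec x} \cap t_{\vec x^*}$ to be an implicant of $T$ (equivalently, of the monotone formula $g$) covered by $\vec x$, and the soft clauses make the \textsc{Partial MaxSAT} optimum minimize the cardinality of that term. Your write-up is in fact more detailed than the paper's own sketch --- spelling out both directions of the correspondence between feasible assignments and implicants contained in $t_{\vec x}$, the nonvacuousness of feasibility, and the observation that a minimum-cardinality implicant is automatically prime --- but the underlying argument is the same.
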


Clearly enough, if $\vec x$ is such that $T(\vec x) = 0$, then it is enough to consider the same instance of \textsc{Partial MaxSAT} as above, except that
$C_{\mathrm{hard}} = \{c \cap t_{\vec x} : c \in \cnf(\neg T) \}$. 

Finally, one can take advantage of this \textsc{Partial MaxSAT} characterization for generating a preset number of minimal sufficient reasons
(basically, one generates a first reason $t$, then one adds to $C_{\mathrm{hard}}$ the negation of $t$ as a clause as well as a \cnf\ encoding of a cardinality constraint
for ensuring that the next reasons to be generated have the same size as the one of $t$, and we resume until the bound is reached or no solution exists).

%%%%%%%%%%%%%%%%%%%%%%%%%%%%%%%%%%%%%%%%%%%%%%%%%%%%%%%%%%%%%%%%%%%%%%%%%%%%%%%%
\section{Computing All Contrastive Explanations}\label{contrastive}
%%%%%%%%%%%%%%%%%%%%%%%%%%%%%%%%%%%%%%%%%%%%%%%%%%%%%%%%%%%%%%%%%%%%%%%%%%%%%%%%

Interestingly, it has been shown that sufficient reasons and contrastive explanations are connected by a minimal hitting set duality \cite{DBLP:journals/corr/abs-2012-11067}.
%Thus, on the running example, when reasons and explanations are viewed as sets of literals, the three contrastive explanations $x_4$, $x_1 \wedge x_2$, and $x_1 \wedge x_3$ for $\vec x$ given $T$ are precisely the minimal hitting sets of the two sufficient reasons for $\vec x$ given $T$, that are $x_1 \wedge x_4$ and $x_2 \wedge x_3 \wedge x_4$. 
This duality can be leveraged to derive one of the two sets of explanations from the other one using algorithms for computing minimal hitting sets \cite{Reiter87,DBLP:journals/ipl/Wotawa01}. 
%\color{black}
%\pierre{Références sur le calcul de MHS à ajouter ?}
%\color{blue}

However, in the case of decision trees, a more direct and much more efficient approach to derive all the contrastive explanations for $\vec x \in \{0,1\}^n$ given $T \in \dt_n$
can be designed. Indeed, unlike what happens for sufficient reasons (see Section \ref{all}), the set of {\it all} contrastive explanations for $\vec x \in \{0,1\}^n$ given a decision tree $T \in \dt_n$
can be computed in polynomial time from $\vec x$ and $T$: \footnote{This result has also been achieved in parallel and independently of us (see \cite{DBLP:journals/corr/abs-2106-01350}), while this paper was submitted for publication.}

\begin{proposition}\label{prop:allContrastiveDT}
The set of all contrastive explanations for $\vec x \in \{0,1\}^n$ given a decision tree $T \in \dt_n$ can be computed in time polynomial
in $n+\size{T}$ as $\mathit{min}(\{c \cap t_{\vec x}: c \in \cnf(f)\}, \subseteq)$.
\end{proposition}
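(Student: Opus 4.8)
The plan is to reduce to the positive case and then reinterpret each clause of $\cnf(T)$ as the set of features that an instance must flip in $\vec x$ in order to reach a $0$-leaf. Since $\dt_n$ is closed under negation, I may assume without loss of generality that $T(\vec x) = 1$, handling $T(\vec x) = 0$ by replacing $T$ with $\neg T$ (computable in linear time), so that the generic $\cnf(f)$ of the statement is read as $\cnf(T)$ when $\vec x$ is positive and as $\cnf(\neg T)$ when it is negative. I recall that each clause $c \in \cnf(T)$ is the negation of the term describing a root-to-leaf path $\pi$ ending in a $0$-leaf.

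First I would unfold the definition of a contrastive explanation into a purely set-theoretic condition. A term $t \subseteq t_{\vec x}$ satisfies the defining requirement ``$t_{\vec x} \setminus t$ is not an implicant of $T$'' exactly when there is a negative instance $\vec x'$ that agrees with $\vec x$ outside $\Var{t}$. The family $\mathcal{F}$ of all such terms is upward closed inside the power set of $t_{\vec x}$: if $t \subseteq t'$ then $t_{\vec x}\setminus t'$ is a weaker term than $t_{\vec x}\setminus t$, so a negative model of the latter is still a negative model of the former. Consequently a contrastive explanation is precisely a $\subseteq$-minimal element of $\mathcal{F}$; the ``for every $\ell \in t$, $t \setminus \{\ell\} \notin \mathcal{F}$'' clause of the definition is equivalent to subset-minimality for an upward-closed family.

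The heart of the argument is a correspondence between the elements of $S := \{c \cap t_{\vec x} : c \in \cnf(T)\}$ and minimal ``flip sets''. For a $0$-path $\pi$ with clause $c_\pi$, I would compute $c_\pi \cap t_{\vec x}$ literally: a literal of $c_\pi$ survives the intersection exactly when $\pi$ assigns to its variable the value opposite to $\vec x$, so $c_\pi \cap t_{\vec x}$ is the term, with $\vec x$'s polarities, on the set $D_\pi$ of variables on which $\pi$ forces a flip away from $\vec x$. Then I would prove the pivotal equivalence: $t \in \mathcal{F}$ if and only if some $s \in S$ satisfies $s \subseteq t$. For the forward direction, a witnessing negative instance $\vec x'$ follows some $0$-path $\pi$, and every variable on which $\vec x'$ differs from $\vec x$ lies in $\Var{t}$, forcing $D_\pi \subseteq \Var{t}$, i.e. $c_\pi \cap t_{\vec x} \subseteq t$. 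Conversely, given $s = c_\pi \cap t_{\vec x} \subseteq t$, I would build $\vec x'$ by following $\pi$ on its path variables and copying $\vec x$ elsewhere; this $\vec x'$ is a negative instance agreeing with $\vec x$ off $D_\pi \subseteq \Var{t}$, so $t \in \mathcal{F}$.

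With this equivalence, $\mathcal{F}$ is the upward closure of $S$ inside the power set of $t_{\vec x}$, and since $S \subseteq \mathcal{F}$ it follows that $\mathit{min}(\mathcal{F}, \subseteq) = \mathit{min}(S, \subseteq)$, which is exactly the claimed formula. Finally I would bound the running time: $\cnf(T)$ has at most one clause per $0$-leaf, so $|S| \le \size{T}$ and each element has at most $n$ literals; forming $S$ costs $O(\size{T}\cdot n)$ and extracting the $\subseteq$-minimal elements by pairwise comparison costs $O(\size{T}^2 \cdot n)$, both polynomial in $n + \size{T}$. I expect the main obstacle to be the pivotal equivalence, in particular arguing carefully that on path variables outside $\Var{t}$ the constructed $\vec x'$ automatically agrees with $\vec x$ (precisely because such variables escape $D_\pi$), since this is where the exact polarity bookkeeping in $c_\pi \cap t_{\vec x}$ and the read-once structure of $T$ come together.
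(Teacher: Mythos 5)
Your proposal is correct, but it follows a genuinely different route from the paper's proof. The paper never unfolds the definition of contrastive explanations directly: it first recalls that the sufficient reasons for $\vec x$ are exactly the prime implicants of the monotone \cnf\ $\{c \cap t_{\vec x} : c \in \cnf(T)\}$ (a formula equivalent to the complete reason for $\vec x$), then invokes the minimal hitting set duality between sufficient reasons and contrastive explanations cited from the literature, and finally uses two facts about monotone functions --- that the prime implicates of a monotone function are the minimal hitting sets of its prime implicants, and that the prime implicates of a monotone \cnf\ are its $\subseteq$-minimal clauses --- to arrive at the stated formula. You instead argue from first principles: you identify contrastive explanations with the $\subseteq$-minimal elements of the upward-closed family $\mathcal{F}$ of flip sets (correctly noting that single-literal minimality in the definition coincides with subset-minimality for an upward-closed family), prove the pivotal equivalence ($t \in \mathcal{F}$ if and only if $c \cap t_{\vec x} \subseteq t$ for some $c \in \cnf(T)$) by reasoning directly on root-to-leaf paths ending in $0$-leaves, and conclude via the elementary fact that the minimal elements of an upward closure are the minimal elements of its generating set. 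Your route buys self-containedness: it needs neither the external duality result nor any prime implicant/implicate machinery, and it makes the combinatorial meaning of each $c \cap t_{\vec x}$ transparent, namely the set of features of $\vec x$ that must be flipped to reach a given $0$-leaf. The paper's route buys brevity and context: granted the cited duality, the argument is a few lines; it places the result inside the abductive/contrastive duality that structures the paper; its central object (the monotone \cnf\ equivalent to the complete reason) is the same one exploited in Proposition \ref{prop:explanatoryDT} and in the model-counting approach; and it is phrased for an arbitrary \cnf\ representation of $f$, the tree structure entering only in the final complexity bound, whereas your pivotal equivalence as stated is tied to the path semantics of decision trees (though it adapts to general \cnf\ formulas with little change).
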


\begin{example}
On the running example, we have $\cnf(T) = \{x_1 \vee x_2,$ $x_1 \vee \overline{x_2} \vee x_3,$ $x_1 \vee \overline{x_2} \vee \overline{x_3} \vee x_4,$ 
$\overline{x_1} \vee x_2 \vee x_3 \vee x_4,$ $\overline{x_1} \vee x_2 \vee \overline{x_3} \vee x_4,$ 
$\overline{x_1} \vee \overline{x_2} \vee x_3 \vee x_4,$ $\overline{x_1} \vee \overline{x_2} \vee \overline{x_3} \vee x_4\}$.
Thus, with $\vec x = (1, 1, 1, 1)$, %, 1)$, 
we have 
%$\{c \cap t_{\vec x}: c \in \cnf(T)\} = \{x_1 \vee x_2,$ $x_1 \vee x_3,$ $x_1 \vee x_4,$ 
%$x_2 \vee x_3 \vee x_4,$ $x_2 \vee x_4,$ $x_3 \vee x_4,$ $x_4\}$. Accordingly, 
$\mathit{min}(\{c \cap t_{\vec x}: c \in \cnf(f)\}, \subseteq)$
$= \{x_1 \vee x_2,$ $x_1 \vee x_3,$ $x_4\}$, which corresponds to the contrastive explanations $x_1 \wedge x_2$, $x_1 \wedge x_3$, $x_4$ 
for $\vec x$ given $T$ (viewing clauses and terms as sets of literals).
\end{example}

As straightforward consequences of Proposition \ref{prop:allContrastiveDT}, computing necessary / relevant features and computing the explanatory importance
of features w.r.t. contrastive explanations can be achieved in time polynomial in $n+\size{T}$. Similarly, statistics about the size of contrastive explanations can be
easily established, and contrastive explanations can be easily minimized and counted.

%\color{black}

%%%%%%%%%%%%%%%%%%%%%%%%%%%%%%%%%%%%%%%%%%%%%%%%%%%%%%%%%%%%%%%%%%%%%%%%%%%%%%%%
\section{Experiments}\label{experiments}

\paragraph{Empirical setting.}
%\color{magenta}
\begin{table}
\begin{tiny}
\begin{center}
\caption{\label{tab:results}
Empirical results based on 12 datasets. }
%For each dataset, we indicate the mean accuracy over the $10$ decision trees (\%A) that have been generated, the average number of nodes in those trees (\#N), the average number of binary features they are based on (\#B). We also indicate the median and maximum size for sufficient and minimal reasons. Finally, we report the median and maximum number of necessary and relevant features.}

 \begin{tabular}{lrrrrrrrrrrrrrrr}
 \toprule
   &\multicolumn{3}{c}{Decision Tree}&&\multicolumn{2}{c}{|Sufficient|}&&\multicolumn{2}{c}{|Minimal|}&&\multicolumn{2}{c}{\#Nec. Features}&&\multicolumn{2}{c}{\#Rel. Features}\\
   \cmidrule{2-4} \cmidrule{6-7} \cmidrule{9-10} \cmidrule{12-13} \cmidrule{15-16}
   Dataset&\%A&\#N&\#B&&med&max&&med&max&&med&max&&med&max\\
   \midrule
   recidivism&63.41&13828.80&147.60&&14&22&&13&22&&6&19&&60&98\\
   adult&81.36&12934.00&2974.80&&16&36&&16&36&&7&22&&263&543\\
   bank marketing&87.40&6656.40&1432.60&&14&21&&14&21&&3&16&&247&398\\
   bank&88.99&5523.60&977.80&&13&24&&13&24&&4&15&&200&330\\
   lending loan&73.49&2610.40&1131.40&&16&31&&16&31&&8&25&&226&442\\
   contraceptive&50.44&1252.20&88.60&&11&20&&11&20&&8&17&&25&47\\
   compas&65.98&1230.00&46.20&&6&14&&6&14&&3&12&&16&33\\
   christine&63.36&853.20&426&&12&47&&12&47&&8&41&&92&202\\
   farm-ads&86.75&544.80&264.60&&20&99&&20&99&&16&92&&73&192\\
   mnist49&95.47&539.60&267.90&&22&30&&22&30&&9&19&&91&166\\
   spambase&91.94&536.40&264.80&&15&29&&15&29&&9&24&&68&146\\
   mnist38&96.07&506.60&251.40&&19&28&&19&28&&8&20&&93.50&157\\
   \bottomrule
\end{tabular}
%\end{center}
%\end{tiny}
%\end{table}
%
%\begin{table}
%\begin{tiny}
%\begin{center}
%\caption{\label{tab:results}
%  Results for 12 datasets. For each dataset, we report the median and maximum number  sufficient reasons\pierre{Attention les - sur médianes...}, contrastives and minimal reasons. We also report the median and maximum size of contrastives.}

 \begin{tabular}{lrrrrrrrrrrr}
   \toprule
   &\multicolumn{2}{c}{\#Sufficient}&&\multicolumn{2}{c}{\#Contrastive}&&\multicolumn{2}{c}{|Contrastive|}&&\multicolumn{2}{c}{\#Minimal}\\
   \cmidrule{2-3}\cmidrule{5-6}\cmidrule{8-9}\cmidrule{11-12}
   Dataset&med&max&&med&max&&med&max&&med&max\\
   \midrule
   recidivism&10387&9734080&&54&145&&3&16&&2&144\\
   adult&-&$\geq$ 1573835722607300000000000&&201&470&&4&16&&3&256\\
   bank marketing&-&$\geq$ 7460375213484350000000&&189&337&&4&13&&8&432\\
   bank&-& $\geq$ 7433951979018500000&&150&277&&4&13&&4&168\\
   lending loan&459258918095775&943243242816203000000000000000&&157&311&&3&12&&3&192\\
   contraceptive&20,50&4272&&21&52&&2&11&&2&48\\
   compas&16&444&&13&33&&2&11&&2&21\\
   christine&63108&2167735434744&&71&151&&3&8&&2&4096\\
   farm-ads&1177,50&921895392&&59&166&&2&10&&-& $\geq$ 10000\\
   mnist49&7392384&715892613696000&&61&106&&2&12&&-&$\geq$ 10000\\
   spambase&15712&2535069312&&50&107&&2&11&&4&384\\
   mnist38&14849376&16922386736640&&62&107&&3&11&&32&3072\\
   \bottomrule
\end{tabular}
\end{center}
\end{tiny}
\end{table}

%The empirical protocol was as follows. 
We have considered 90 datasets, which are standard benchmarks from the well-known repositories Kaggle (\url{www.kaggle.com}), OpenML (\url{www.openml.org}), and UCI (\url{archive.ics.uci.edu/ml/}). {\tt mnist38} and {\tt mnist49} are subsets of the {\tt mnist} dataset, restricted to the instances of 3 and 8 (resp. 4 and 9) digits.
%Due to space constraints, the complete list of benchmarks used is reported as a supplementary material, 
%together with additional information about the decision trees and the sizes of the different reasons that have been computed.
%Several datasets considered in our experiments include a large number of instances with a large number of features.     
Because some datasets are suited to the multi-label classification task, we used the standard ``one versus all'' policy to deal with them: 
all the classes but the target one are considered as the complementary class of the target.
Categorical features have been treated as arbitrary numbers (the scale is nominal). 
As to numeric features, no data preprocessing has taken place: these features have been binarized on-the-fly 
by the decision tree learning algorithm that has been used.

% further details about the corresponding trees and 
%the computations of the various notions of reasons considered in the paper.
% Some of the datasets gather large numbers of instances, based on large numbers of features (see Table \ref{tab:results} for details).
%For example, the dataset "Dexter" contains $2600$ instances, based on $20 000$ features, and the dataset "p53 mutant" contains $16772$ instances based on $5409$ attributes.
% Some of them concern more than one class (and its complementary class).  To deal with such datasets, a ``one versus al'' policy 
% has been considered:  all the classes but the target one are considered as the complementary class of the target.
% Categorical features are treated as arbitrary numbers (the scale is nominal). As to numeric features, no preprocessing has taken place,
% all numeric features have been binarized on-the-fly by the algorithm used for generating decision trees.

For every benchmark $b$, a $10$-fold cross validation process has been achieved. 
Namely, a set of $10$ decision trees $T_b$ have been computed and evaluated
from the labelled instances of $b$, partitioned into $10$ parts. 
One part was used as the test set and the remaining $9$ parts as the training set for generating
a decision tree. This tree is thus in 1-to-1 correspondence with the test set chosen within the whole dataset $b$.
The classification performance for $b$ was measured as the mean accuracy obtained over the $10$ decision trees
generated from $b$. The CART algorithm, 
and more specifically its implementation provided by the \textrm{Scikit-Learn} 
library \cite{scikit-learn} has been used to learn decision trees. % --> Ce serait bien de les citer, mais par équité ce serait bien aussi de citer les auteurs de Pysat :-)
%\footnote{\url{scikit-learn.org/stable/modules/generated/sklearn.tree.DecisionTreeClassifier.html}} 
All hyper-parameters of the learning algorithm have been set to their default value.
Notably, decision trees have been learned using the Gini criterion, and without any maximal depth or any other manual limitation. 

% using a Python  3.7.9 program based on the Scikit-Learn library, release 0.23.2\footnote{\url{https://scikit-learn.org/stable/}}. 
% The "DecisionTreeClassifier" class of this library (\url{https://scikit-learn.org/stable/modules/generated/sklearn.tree.DecisionTreeClassifier.html}) 
% has been used, each parameter being set to its default value. Especially, the Gini criterion was considered to build the decision trees.
% No maximal depth or any other manual limitation was preset. 

For each benchmark $b$, each decision tree $T_b$, and a subset of at most 100 instances $\vec x$
picked up at random in the test set following a uniform distribution, 
%each instance $\vec x$ of the corresponding test set,
we computed 
%the direct reason for $\vec x$ given $T_b$, 
a sufficient reason for $\vec x$ given $T_b$ (using the standard greedy algorithm run on the direct reason $t_{\vec x}^{T_b}$), and 
a minimal sufficient reason for $\vec x$ given $T_b$ using the \textsc{Partial MaxSAT} encoding presented in Proposition \ref{prop:minimal-weightoptim}. 
This enabled us to draw some statistics (median, maximum) about the sizes of the reasons that have been generated.
Using the algorithm presented in the proof of Proposition \ref{prop:explanatoryDT}, we also derived the necessary and relevant explanatory features 
for each $\vec x$, and again drew some statistics about them. % depending on the input instance.
Exploiting the model counter \d4, we computed the number of sufficient reasons for $\vec x$ given $T_b$,
as well as the explanatory importance of every feature. Taking advantage of the algorithm given in Proposition \ref{contrastive},
we computed the number of contrastive explanations for $\vec x$ given $T_b$, and drew some statistics about those numbers and about the sizes of
the contrastive explanations. Finally, using the approach described in Section \ref{enumerating}, we enumerated all the minimal sufficient reasons
for $\vec x$ given $T_b$ up to a limit of 10 000, and again drew some statistics about the numbers of minimal sufficient reasons.
%{\#Sufficient}&&\multicolumn{2}{c}{\#Contrastive}&&\multicolumn{2}{c}{|Contrastive|}&&\multicolumn{2}{c}{\#Minimal}
Of course, for each computation, we measured the corresponding runtimes since this is fundamental to determine the extent to which the
algorithms are practical (details are provided at \url{http://www.cril.univ-artois.fr/expekctation/}).

%The sufficient reasons were computed from the direct reasons. % for $\vec x$ given $T_b$. % and/or from a minimal reason for $\vec x$ given $T_b$.

%For each instance in a dataset and each type of reason, we computed the median size of the reason and the maximum size
%(given as a number of binary features). %, together with the corresponding standard deviation.

All the experiments have been conducted on a computer equipped with 
Intel(R) XEON E5-2637 CPU @ 3.5 GHz  and 128 GiB of memory. 
\d4\ \cite{LagniezMarquis17} was run with its default parameters.
For computing minimal reasons, we used the Pysat library \cite{pysat}, which provides the implementation of the RC2 \textsc{Partial MaxSAT} solver.
This solver was run using the parameters corresponding to the ``Glucose'' setting.
A time-out of 100s per instance was set for \d4. %\ and for RC2. 

\paragraph{Results.}

%For each dataset, we indicate the mean accuracy over the $10$ decision trees (\%A) that have been generated, the average number of nodes in those trees (\#N), the average number of binary features they are based on (\#B). We also indicate the median and maximum size for sufficient and minimal reasons. Finally, we report the median and maximum number of necessary and relevant features
%   &\multicolumn{3}{c}{Decision Tree}&&\multicolumn{2}{c}{|Sufficient|}&&\multicolumn{2}{c}{|Minimal|}&&\multicolumn{2}{c}{\#Nec. Features}&&\multicolumn{2}{c}{\#Rel. Features}
%{\#Sufficient}&&\multicolumn{2}{c}{\#Contrastive}&&\multicolumn{2}{c}{|Contrastive|}&&\multicolumn{2}{c}{\#Minimal}

Table \ref{tab:results} (top and bottom) reports an excerpt of our results, focusing on $12$ benchmarks out of $90$ (the selected datasets
are among those containing many instances and/or many features). The leftmost column gives the name of the dataset $b$. Columns 
$\%A$, $\%N$, and $\#B$ give, respectively, the mean accuracy over the $10$ decision trees, the average number of nodes in those trees, and
the average number of binary features they are based on. The next columns give statistics (median, maximum) about, respectively, 
the size of the sufficient reasons (|Sufficient|) and of the minimal sufficient reasons (|Minimal|) that have been computed, as well as
about the number of necessary ($\#$Nec. Features) and relevant ($\#$Rel. Features) features that appear in the full set of sufficient reasons
for the instance. Table \ref{tab:results} (bottom) give statistics (median, maximum) about, respectively, the number of sufficient reasons 
($\#$Sufficient), the number of contrastive explanations ($\#$Contrastive) and their sizes (|Contrastive|), and finally the number of
minimal sufficient reasons ($\#$Minimal).

As to the computation times, it turns out that all the algorithms described in the previous sections proved as efficient in practice.
%used for deriving reasons of various kinds proved to be quite efficient in practice.
This is not surprising for those algorithms having a polytime worst-case complexity (the greedy algorithm for computing a sufficient reason, the one
for deriving explanatory features, and the one for computing all the contrastive explanations). It was less obvious at first sight for the algorithms used
for counting the number of sufficient reasons and for computing the explanatory importance of features. However, all the computations that have been run 
have terminated in due time, except for 3 datasets out of 90, namely {\tt adult}, {\tt bank\_marketing}, and {\tt bank}. 
For these datasets, the time limit of 100s has been reached for, respectively, 203, 150, and 336 instances out of 1000 %\pierre{Vérifier les 1000} 
(in this case, the median number of sufficient reasons has not been reported). Notably, for all the 90 datasets but those 3, the median time required
 for counting the number of sufficient reasons and computing the explanatory importance of features never exceeded 1s.
Computing a minimal sufficient reason, and more generally all such reasons looked challenging as well, due to both the intrinsic complexity of computing a
minimal sufficient reason and to their number. Nevertheless, our enumeration algorithm succeeded in deriving %before the time-out \pierre{100s ?} 
\emph{all the minimal sufficient reasons} for every dataset except 3 out of 90, namely {\tt farm-ads}, {\tt mnist49}, and {\tt gisette}. 
For these datasets, the limit of 10 000 reasons has been reached for, respectively, 5, 16, and 3 instances out of 1000. %\pierre{Vérifier les 1000}  
Interestingly, the median time needed to derive all the minimal sufficient reasons 
%(within the limit of 10 000) 
for the instances for which the computation 
has been successful exceeded 1s only for 2 datasets ({\tt adult} and {\tt bank\_marketing}). %, and {\tt mnist49}).

%Comparing the set of sufficient reasons with its subset of  minimal ones, 
Beyond providing evidence that the number of reasons can be huge, our experiments have highlighted that the greedy algorithm for deriving
a sufficient reason computes in practice a minimal sufficient reason in many cases. % (of course, there is no theoretical guarantee ensuring it). 
They have also shown that the number of explanatory relevant features for an instance is typically much lower than the number
of binary features used to describe it, and that the number of explanatory necessary features is also significantly lower than
the number of explanatory relevant features. The gap between the two explains the possibly enormous number of sufficient reasons.
When considering the full set of reasons, a considerable difference between the number of sufficient reasons and the number of minimal sufficient reasons 
%(sufficient reasons are much more numerous)
can also be observed. Finally, 
like minimal sufficient reasons, 
the number of contrastive explanations 
appears in many cases not very large, which is a good point from an intelligibility 
perspective. 
%(despite the use of a fully-interpreted version of Python for the implementation). Indeed, for every benchmark $b$ but ``Adult'', and every instance $\vec x$ of $b$, 
%a reason for $\vec x$ given $T_b$ has been generated {\bf in less than 100ms}, whatever the type of reason one was looking for. 
%Note that ``Adult'' led to quite huge decision trees
%(in average, with $12926$ nodes, based on more than $2973$ binary features). Despite this size, for this dataset, the computation times for
%computing a reason never exceeded $600$ms, except for minimal reasons which turned out to be more demanding from a computation point of view (as
%this was expected in light of Proposition \ref{prop:minimalreasonDT}). But it is worth noting that even the performance of the algorithm for 
%computing minimal reasons was not that bad for this benchmark, with an average computation time of $2.69$s and a maximal computation time of $5.06$s.
%This explains why we have refrained from leveraging Proposition \ref{prop:approximable} to compute approximate minimal reasons.
%At a first glance, the empirical efficiency of the algorithm for computing minimal reasons was a surprise, 
%given the intrinsic complexity of the derivation of minimal reasons. However, the progress made in \textsc{SAT} solving 
%for the past two decades explains why our empirical results are finally not that surprising. 
%%, such an a priori judgement must be taken with a pinch of salt, as our empirical results confirm it.

%\color{black}

%%%%%%%%%%%%%%%%%%%%%%%%%%%%%%%%%%%%%%%%%%%%%%%%%%%%%%%%%%%%%%%%%%%%%%%%%%%%%%%%
\section{Conclusion}\label{conclusion}
%%%%%%%%%%%%%%%%%%%%%%%%%%%%%%%%%%%%%%%%%%%%%%%%%%%%%%%%%%%%%%%%%%%%%%%%%%%%%%%%

In light of our results, it turns out that the explanatory power of decision trees goes far beyond its ability to generate direct reasons.
From a decision tree, the explanatory importance of features and the minimal sufficient reasons for an instance
can be computed efficiently most of the time.
For decision trees, fully addressing the ``Why not?'' question also appears as easier than fully 
addressing the ``Why?'' question: computing the full set of sufficient reasons %(i.e., the complete reason)
for the instance at hand is typically out of reach, while computing its full set of contrastive explanations is tractable.

Accordingly, the language of decision trees appears not only as appealing for the learning purpose, but also as a good target 
when one needs to reason on the various forms of explanations (abductive and contrastive ones)  associated with the
predictions made. This coheres with (and completes) the results reported in \cite{Audemardetal20}, showing that many other explanation
and verification tasks are tractable for decision tree classifiers.

\section*{Acknowledgements}
%Many thanks to the anonymous reviewers for their comments and suggestions. 
This work has benefited from the support of the AI Chair EXPE\textcolor{orange}{KC}TATION (ANR-19-CHIA-0005-01) of the French National Research Agency.
It was also partially supported by TAILOR, a project funded by EU Horizon 2020 research
and innovation programme under GA No 952215.

%
%
%\newpage
%\section*{Broader Impact}
%
%Authors are required to include a statement of the broader impact of their work, including its ethical aspects and future societal consequences. 
%Authors should discuss both positive and negative outcomes, if any. For instance, authors should discuss a) 
%who may benefit from this research, b) who may be put at disadvantage from this research, c) what are the consequences of failure of the system, and d) whether the task/method leverages
%biases in the data. If authors believe this is not applicable to them, authors can simply state this.
%
%Use unnumbered first level headings for this section, which should go at the end of the paper. {\bf Note that this section does not count towards the eight pages of content that are allowed.}

%%%%%%%%%%%%%%%%%%%%%%%%%%%%%%%%%%%%%%%%%%%%%%%%%%%%%%%%%%%%
%\newpage
%
%%\bibliographystyle{named}
\bibliographystyle{plain}
\bibliography{biblio}

\section*{Proofs}

\noindent {\bf Proof of Proposition \ref{prop:nbDT}}
\begin{proof}
    Let $T$ be the complete binary tree of depth $k$, formed by $n = 2^k - 1$ internal nodes and $2^k$ leaves. We assume a breadth-first ordering of internal nodes, 
    such that the root is labeled by $x_1$, the nodes of depth $1$ are labeled by $x_2$ and $x_3$, and so on. Each internal node at depth $k-1$ from the root of $T$
    has two children, one of it is a $0$-leaf and the other one is a $1$-leaf. For an arbitrary instance $\vec x \in \{0,1\}^n$ and any 
    complete subtree $T'$ of $T$ of depth $d$, let $s(\vec x,T')$ denote the set of sufficient reasons of $\vec x$ given $T'$, and 
    let $\sigma(\vec x,d) = \size{s(\vec x,T')}$  denote the number of those sufficient reasons. We show by induction on $d$ that:
    \begin{align}
        \sigma(\vec x, 1) &= 1 \label{prop:nbDT:base}\\
        \sigma(\vec x, d + 1) &= \sigma(\vec x,d)(\sigma(\vec x,d) + 1) \label{prop:nbDT:induction}
    \end{align}
    For the base case (\ref{prop:nbDT:base}), any complete subtree $T'$ of $T$ of depth $d = 1$ has a single internal node, say $x_i$, with two leaves 
    labeled by $0$ and $1$, respectively. Therefore, the unique sufficient reason for $\vec x$ given $T'$ is either $x_i$ or $\overline x_i$, and hence,
    $\sigma(\vec x, 1) = 1$. Now, consider any complete subtree $T'$ of $T$ of depth $d + 1$ rooted at a node $x_i$. Let $T'_l(x_i)$ and $T'_r(x_i)$
    denote the subtrees of depth $d$, respectively rooted at the left child of $x_i$ and the right child of $x_i$. Suppose without loss of generality 
    that the unique path leading to $T'(\vec x) = 1$ includes the left child of $x_i$ (i.e. $T'_l(\vec x) = 1$). By construction, 
    \begin{align*}
        s(\vec x, T') = \{t_l \land t_r: t_l \in s(\vec x, T'_l), t_r \in s(\vec x, T'_r)\} \\
         \cup \{l_i \land t_l: t_l \in s(\vec x, T'_l)\} 
    \end{align*}
    where $l_i = \overline x_i$ if $x_i = 0$ in $\vec x$, and $l_i = x_i$ otherwise. Since by induction hypothesis $s(\vec x, T'_l) = s(\vec x, T'_r) = \sigma(\vec x,d)$, 
    it follows that $\sigma(\vec x, d + 1) = \sigma(\vec x,d)^2 + \sigma(\vec x,d)$. Finally, since the doubly exponential sequence\footnote{See \url{https://oeis.org/A007018}.} 
    given by $a(1) = 1$ and $a(d+1) = a(d)^2 + a(d)$ satisfies  $a(d) = \lfloor c^{2^{d-1}} \rfloor$, where $c \sim 1.59791$, it follows that 
    $\sigma(\vec x, k) \geq \lfloor (\nicefrac{3}{2})^{2^{k-1}} \rfloor$. Using $2^{k-1} = (n + 1)/2$, we get the desired result.   
\end{proof}

\noindent {\bf Proof of Proposition \ref{prop:nbminDT}}

\begin{proof}
One first need the following lemma that gives a recursive characterization of the set of sufficient reasons for an instance given a Boolean classifier:

\begin{lemma}\label{prop:computingallSR}
For any Boolean function $f \in \mathcal F_n$ and any instance $\vec x$ $\in \{0, 1\}^n$, the following inductive characterization of $\mathit{sr}(\vec x, f)$ holds:

\begin{tabular}{ll}
$\mathit{sr}(\vec x, 1)$  & $=\{1\}$\\
$\mathit{sr}(\vec x, 0)$  & $=\{\}$ \\
$\mathit{sr}(\vec x, f)$  & $= \mathit{sr}(\vec x, (f \mid \ell) \wedge (f \mid \overline{\ell})) \cup \{\ell \wedge t_{\ell} : 
t_{\ell} \in \mathit{sr}(\vec x, f \mid \ell) \mbox{ s.t. } t_{\ell} \not \models f \mid \overline{\ell}\}$\\
& where $\Var{\ell} \subseteq \Var{f}$ and $t_{\vec x} \models \ell$ \\
\end{tabular}

\noindent and 
$$\mathit{sr}(\vec x, (f \mid \ell) \wedge (f \mid \overline{\ell})) = \mathit{max}(\{t_{\ell} \wedge t_{\overline{\ell}} :
t_{\ell} \in \mathit{sr}(\vec x, f \mid \ell), t_{\overline{\ell}} \in \mathit{sr}(\vec x, f \mid \overline{\ell})\}, \models).$$
\end{lemma}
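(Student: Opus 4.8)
The plan is to prove the identity for $\mathit{sr}(\vec x, f)$ read as the set of prime implicants of $f$ that cover $\vec x$ (which coincides with the sufficient reasons of $\vec x$ given $f$ when $f(\vec x) = 1$); the two base cases are then immediate, since the only prime implicant of the constant $1$ is the empty term and no consistent term implies the constant $0$. The key structural observation is that any sufficient reason $t$ is a subset of $t_{\vec x}$, so it can contain the literal on $x_i$ only with the polarity prescribed by $\vec x$, namely $\ell$ (the literal with $t_{\vec x}\models\ell$); in particular $\overline\ell\notin t$. This lets me partition $\mathit{sr}(\vec x, f)$ into the reasons that mention $x_i$ (necessarily through $\ell$) and those that do not mention $x_i$ at all. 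I will show that the first block is exactly $B = \{\ell\wedge t_\ell : t_\ell\in\mathit{sr}(\vec x, f\mid\ell),\ t_\ell\not\models f\mid\overline\ell\}$ and the second block is exactly $A = \mathit{sr}(\vec x, (f\mid\ell)\wedge(f\mid\overline\ell))$; since the two blocks are disjoint, their union is the claimed right-hand side.

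For the block $B$, I would argue by conditioning. A term $\ell\wedge t_\ell$ with $t_\ell$ over the remaining variables implies $f$ iff $t_\ell$ implies $f\mid\ell$, and it covers $\vec x$ iff $t_\ell$ does (as $\ell$ agrees with $\vec x$). Primality then splits into two checks: no literal of $t_\ell$ can be dropped, which is precisely primality of $t_\ell$ as an implicant of $f\mid\ell$, i.e. $t_\ell\in\mathit{sr}(\vec x, f\mid\ell)$; and $\ell$ itself cannot be dropped. Dropping $\ell$ leaves $t_\ell$, which implies $f$ iff it implies both $f\mid\ell$ and $f\mid\overline\ell$; since $t_\ell\models f\mid\ell$ already holds, $\ell$ is removable exactly when $t_\ell\models f\mid\overline\ell$, so irremovability is the side condition $t_\ell\not\models f\mid\overline\ell$. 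This yields $B$.

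For the block $A$, the point is that a term $t$ not mentioning $x_i$ implies $f$ iff it implies $(f\mid\ell)\wedge(f\mid\overline\ell)$ (the universal quantification of $f$ over $x_i$), so the implication order restricted to such terms is the same for $f$ and for $(f\mid\ell)\wedge(f\mid\overline\ell)$; consequently the prime implicants of $f$ avoiding $x_i$ and covering $\vec x$ coincide with $\mathit{sr}(\vec x, (f\mid\ell)\wedge(f\mid\overline\ell))$. It then remains to establish the displayed product formula for $\mathit{sr}(\vec x, f_1\wedge f_2)$ with $f_1 = f\mid\ell$ and $f_2 = f\mid\overline\ell$. Here I would show both inclusions: any prime implicant $t$ of $f_1\wedge f_2$ covering $\vec x$ is an implicant of each $f_j$, hence contains some prime implicant $t_j$ of $f_j$ covering $\vec x$, so $t\supseteq t_1\wedge t_2$, and primality forces equality, placing $t$ among the products; conversely each product is an implicant of $f_1\wedge f_2$ covering $\vec x$, and the $\models$-maximal (most general) products are exactly those admitting no strictly shorter implicant, i.e. the prime implicants, which is what $\mathit{max}(\cdot,\models)$ selects.

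The main obstacle, and the place to be most careful, is this last product identity together with the primality bookkeeping: one must verify both that $\models$-maximality among products implies primality for the conjunction (if a proper sub-term were an implicant, it would again contain a product of prime implicants of the conjuncts, contradicting maximality) and the converse, while consistently treating prime implicants as the set-inclusion-minimal, equivalently $\models$-maximal, implicants. Once the partition by $x_i$ is fixed, the remainder is routine Shannon-expansion and conditioning algebra.
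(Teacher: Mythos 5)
Your proof is correct, but it takes a genuinely different route from the paper's. The paper does not argue from first principles: it recalls the classical Shannon-decomposition recursion for the full set $\mathit{pi}(f)$ of prime implicants (citing the standard literature) and obtains the lemma by filtering that recursion with the covering condition $t_{\vec x} \models t$. Under this filter, the block of prime implicants containing $\overline{\ell}$ vanishes (no term containing $\overline{\ell}$ can cover $\vec x$), and the classical non-absorption side condition ``$\nexists t' \in \mathit{pi}((f \mid \overline{\ell}) \wedge (f \mid \ell))$ with $t_\ell \models t'$'' is shown to collapse to $t_\ell \not\models f \mid \overline{\ell}$, the crux being that $t_\ell \models t'_\ell$ for two prime implicants of $f \mid \ell$ forces $t_\ell \equiv t'_\ell$. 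You instead partition $\mathit{sr}(\vec x, f)$ directly into the reasons containing $\ell$ and those avoiding $x_i$, and establish each block --- as well as both inclusions of the product identity --- by conditioning and primality bookkeeping. Your route is self-contained and derives the side condition more transparently (it is literally ``$\ell$ cannot be dropped''), and it makes visible where coverage does real work: the prime implicants of the conjuncts extracted from a covering implicant of $(f \mid \ell) \wedge (f \mid \overline{\ell})$ are themselves subsets of $t_{\vec x}$, hence lie in the respective $\mathit{sr}$ sets. The paper's route is shorter modulo the cited result and exhibits the lemma as nothing more than the classical prime-implicant recursion relativized to terms covering $\vec x$. Two steps you leave implicit deserve a sentence each: (i) primality need only be tested against single-literal deletions, because implicant-hood is monotone under adding literals --- this is what legitimates your ``two checks'' split and also disposes of subsets that drop $\ell$ together with literals of $t_\ell$; (ii) prime implicants of $(f \mid \ell) \wedge (f \mid \overline{\ell})$ never mention $x_i$, which is needed for the reverse containment in your block $A$.
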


%\noindent {\bf Proof of Proposition \ref{prop:computingallSR}
\begin{proof}
Let us recall first the following inductive characterization of $\mathit{pi}(f)$, the set of prime implicants of $f \in \mathcal F_n$, based on the Shannon decomposition of $f$ over any of its variables $x$ (see e.g., \cite{DBLP:books/sp/BraytonHMS84}):

\begin{tabular}{ll}
$\mathit{pi}(1)$  & $=\{1\}$\\
$\mathit{pi}(0)$  & $=\{\}$\\
$\mathit{pi}(f)$  & $= \mathit{pi}((f \mid \overline{x}) \wedge (f \mid x))$\\
& $\cup \{\overline{x} \wedge t_{\overline{x}} : t_{\overline{x}} \in \mathit{pi}(f \mid \overline{x}) \mbox{ s.t. } \nexists t \in \mathit{pi}((f \mid \overline{x}) \wedge (f \mid x)), t_{\overline{x}} \models t\}$\\
& $\cup \{x \wedge t_x : 
t_x \in \mathit{pi}(f \mid x) \mbox{ s.t. } \nexists t \in \mathit{pi}((f \mid \overline{x}) \wedge (f \mid x)), t_x \models t\}$\\
& where $x \in \Var{f}$\\
\end{tabular}

\noindent and 
$$\mathit{pi}((f \mid \overline{x}) \wedge (f \mid x)) = \mathit{max}(\{t_{\overline{x}} \wedge t_x :
t_{\overline{x}} \in \mathit{pi}(f \mid \overline{x}), t_x \in \mathit{pi}(f \mid x)\}, \models).$$

For the base cases $\mathit{sr}(\vec x, 1) =\{1\}$ and $\mathit{sr}(\vec x, 0) =\{\}$, the result is obvious.
For the general case, taking $x \in \Var{\ell}$, we have:

\begin{tabular}{ll}
$\mathit{sr}(\vec x, f)$ & $= \{t \in \mathit{pi}(f) : t_{\vec x} \models t\}$\\
& $= \{t \in \mathit{pi}((f \mid \overline{x}) \wedge (f \mid x)) : t_{\vec x} \models t\}$\\
& $\cup \{t \in  \{\overline{x} \wedge t_{\overline{x}} : 
t_{\overline{x}} \in \mathit{pi}(f \mid \overline{x}) \mbox{ s.t. } \nexists t' \in \mathit{pi}((f \mid \overline{x}) \wedge (f \mid x)), t_{\overline{x}} \models t'\}, \mbox{ and } t_{\vec x} \models t\}$\\
& $\cup \{t \in \{x \wedge t_x : 
t_x \in \mathit{pi}(f \mid x) \mbox{ s.t. } \nexists t' \in \mathit{pi}((f \mid \overline{x}) \wedge (f \mid x)), t_x \models t'\}, \mbox{ and } t_{\vec x} \models t\}$\\
& $= \mathit{sr}(\vec x, (f \mid \overline{x}) \wedge (f \mid x))$\\
& $\cup \{t \in  \{\overline{x} \wedge t_{\overline{x}} : 
t_{\overline{x}} \in \mathit{pi}(f \mid \overline{x}) \mbox{ s.t. } \nexists t' \in \mathit{pi}((f \mid \overline{x}) \wedge (f \mid x)), t_{\overline{x}} \models t'\}, \mbox{ and } t_{\vec x} \models t\}$\\
& $\cup \{t \in \{x \wedge t_x : 
t_x \in \mathit{pi}(f \mid x) \mbox{ s.t. } \nexists t' \in \mathit{pi}((f \mid \overline{x}) \wedge (f \mid x)), t_x \models t'\}, \mbox{ and } t_{\vec x} \models t\}$\\
\end{tabular}

\smallskip
Now, since $\vec x$ is an instance, whatever $\ell$, it cannot be the case that $t_{\vec x} \models \ell$ and $t_{\vec x} \models \overline{\ell}$.
Suppose that $\ell = x$ (the case $\ell = \overline{x}$ is similar). In this situation, no element of $\{\overline{x} \wedge t_{\overline{x}} : 
t_{\overline{x}} \in \mathit{pi}(f \mid \overline{x}) \mbox{ s.t. } \nexists t \in \mathit{pi}((f \mid \overline{x}) \wedge (f \mid x)), t_{\overline{x}} \models t\}, \mbox{ and } t_{\vec x} \models t\}$ can belong to $\mathit{sr}(\vec x, f)$. As a consequence, we get that:

\begin{tabular}{ll}
$\mathit{sr}(\vec x, f)$ & $= \mathit{sr}(\vec x, (f \mid \overline{x}) \wedge (f \mid x))$\\
& $\cup \{t \in \{\ell \wedge t_\ell : 
t_\ell \in \mathit{pi}(f \mid \ell) \mbox{ s.t. } \nexists t' \in \mathit{pi}((f \mid \overline{\ell}) \wedge (f \mid \ell)), t_\ell \models t'\}, \mbox{ and } t_{\vec x} \models t\}$\\
& where $\Var{\ell} \subseteq \Var{f}$ and $t_{\vec x} \models \ell$\\
\end{tabular}

\smallskip

If $t = \ell \wedge t_\ell$ is such that $t_{\vec x} \models t$ holds, then we have $t_{\vec x} \models t_\ell$. Hence, we have:

\begin{tabular}{ll}
$\mathit{sr}(\vec x, f)$ & $= \mathit{sr}(\vec x, (f \mid \overline{x}) \wedge (f \mid x))$\\
& $\cup \{\ell \wedge t_\ell : t_\ell \in \mathit{sr}(\vec x, f \mid \ell) \mbox{ s.t. } \nexists t' \in \mathit{pi}((f \mid \overline{\ell}) \wedge (f \mid \ell)), t_\ell \models t'\}$\\
& where $\Var{\ell} \subseteq \Var{f}$ and $t_{\vec x} \models \ell$\\
\end{tabular}

\smallskip

Consider now the condition $\exists t' \in \mathit{pi}((f \mid \overline{\ell}) \wedge (f \mid \ell)),$ $t_\ell \models t'$
and suppose that it is satisfied.
Since $\mathit{pi}((f \mid \overline{\ell}) \wedge (f \mid \ell)) = \mathit{max}(\{t'_{\overline{\ell}} \wedge t'_\ell :
t'_{\overline{\ell}} \in \mathit{pi}(f \mid \overline{\ell}), t'_\ell \in \mathit{pi}(f \mid \ell)\}, \models)$, there exist 
$t'_{\overline{\ell}} \in \mathit{pi}(f \mid \overline{\ell})$ and $t'_\ell \in \mathit{pi}(f \mid \ell)$ such that 
$t' = t'_{\overline{\ell}} \wedge t'_\ell$. Thus, we have $t_\ell \models t'_{\overline{\ell}} \wedge t'_\ell$,
and in particular $t_\ell \models t'_\ell$ holds. But since $t_\ell$ and $t'_\ell$ are prime implicants of $f \mid \ell$,
this implies that $t_\ell \equiv t'_\ell$ holds. Furthermore, from $t_\ell \models t'_{\overline{\ell}} \wedge t'_\ell$ we get
that $t_\ell \models t'_{\overline{\ell}}$. In addition, a prime implicant $t'_{\overline{\ell}}$ of $f \mid \overline{\ell}$
such that $t_\ell \models t'_{\overline{\ell}}$ exists if and only if $t_\ell \models f \mid \overline{\ell}$. Altogether,
the condition $\exists t' \in \mathit{pi}((f \mid \overline{\ell}) \wedge (f \mid \ell))$, $t_\ell \models t'$ is equivalent to
$t_\ell \models f \mid \overline{\ell}$. Thus, we get that:

\begin{tabular}{ll}
$\mathit{sr}(\vec x, f)$  & $= \mathit{sr}(\vec x, (f \mid \ell) \wedge (f \mid \overline{\ell}))$\\
& $\cup \{\ell \wedge t_{\ell} : 
t_{\ell} \in \mathit{sr}(\vec x, f \mid \ell) \mbox{ s.t. } t_{\ell} \not \models f \mid \overline{\ell}\}$\\
& where $\Var{\ell} \subseteq \Var{f}$ and $t_{\vec x} \models \ell$\\
\end{tabular}

\smallskip

Finally, if $t \in \mathit{max}(\{t_{\overline{x}} \wedge t_x :
t_{\overline{x}} \in \mathit{pi}(f \mid \overline{x}), t_x \in \mathit{pi}(f \mid x)\}, \models)$, then
by construction $t$ is such that there exist $t_{\overline{x}} \in \mathit{pi}(f \mid \overline{x})$ and $t_x \in \mathit{pi}(f \mid x)$
satisfying $t = t_{\overline{x}} \wedge t_x$. If $t_{\vec x} \models t$ holds, then $t_{\vec x} \models t_{\overline{x}}$ and
$t_{\vec x} \models t_x$ hold. Hence $t_{\overline{x}} \in \mathit{sr}(\vec x, f \mid \overline{x})$ and $t_x \in 
\mathit{sr}(\vec x, f \mid x)$. Consequently, $t \in \mathit{max}(\{t_{\overline{x}} \wedge t_x \mid
t_{\overline{x}} \in \mathit{sr}(\vec x, f \mid \overline{x}), t_x \in \mathit{sr}(\vec x, f \mid x)\}, \models)$.
\end{proof}

%over any set of variables that is a superset of $Var{f}$, 
%let $\mathit{sr}(\vec x, f) = \{t \in \mathit{pi}(f) : t_{\vec x} \models t\}$.

From the inductive characterization of $\mathit{sr}(\vec x, f)$ given by the previous proposition, 
we can easily derive a bottom-up algorithm allowing to derive $\mathit{sr}(\vec x, f)$
when $f$ is represented by a decision tree.
%, or more generally by an \fbdd\ diagram (and even more generally by a Decision-\Dnnf\ circuit,
%observing that when $\mathit{Var}(f_1) \cap \mathit{Var}(f_2) = \emptyset$, we have $\mathit{sr}(\vec x, f_1 \wedge f_2)
%= \{t_1 \wedge t_2 : t_1 \in \mathit{sr}(\vec x, f_1), t_2 \in \mathit{sr}(\vec x, f_2)\}$).

\medskip

Consider now a decision tree $T$ of depth $k \geq 1$ having the form of the one reported in Figure \ref{fig:dt}.
$T$ has $2k-1$ decision nodes and $2k$ leaves.
Suppose that the variables associated with the decision nodes are in one-to-one correspondence with the decision nodes
(i.e., they are all distinct). The number of variables occurring in $T$ is thus $n = 2k-1$, therefore $T$ has $2n+1$ nodes.
Consider now the instance $\vec x \in \{0, 1\}^n$ such that $x_i = 1$ for every $i \in [n]$.
We are going to prove by induction on the depth $k$ of such a tree $T$ that $\vec x$ has $2^{k-1}$ minimal reasons given $T$, each of them
containing $k$ literals.

\begin{figure}[ht]
\begin{center}
\begin{tikzpicture}[scale=0.65,
roundnode/.style={circle, draw=gray!60, fill=gray!5, very thick, minimum size=7mm},
squarednode/.style={rectangle, draw=red!60, fill=red!5, very thick, minimum size=5mm}
]
\node[roundnode] (n) at (2,0) {$$};
\node[roundnode] (n1) at (0,-2) {$$};
\node[squarednode] (n11) at (-1,-4) {$0$};
\node[squarednode] (n12) at (1,-4) {$1$};

\node[roundnode] (n2) at (4,-2) {$$};
\node[roundnode] (n21) at (2,-4) {$$};
\node[squarednode] (n211) at (1,-6) {$0$};
\node[squarednode] (n212) at (3,-6) {$1$};

\node(n22) at (6,-4) {$\ddots$};
\node[roundnode] (n222) at (8,-6) {$$};
\node[roundnode] (n2221) at (6,-8) {$$};
\node[squarednode] (n22211) at (5,-10) {$0$};
\node[squarednode] (n22212) at (7,-10) {$1$};
\node[roundnode] (n2222) at (10,-8) {$$};
\node[squarednode] (n22221) at (9,-10) {$0$};
\node[squarednode] (n22222) at (11,-10) {$1$};

%Lines
\draw[->,dashed] (n) -- (n1);
\draw[->] (n) -- (n2);
\draw[->,dashed] (n1) -- (n11);
\draw[->] (n1) -- (n12);

\draw[->,dashed] (n2) -- (n21);
\draw[->] (n2) -- (n22);
\draw[->,dashed] (n21) -- (n211);
\draw[->] (n21) -- (n212);

\draw[->] (n22) -- (n222);

\draw[->,dashed] (n222) -- (n2221);
\draw[->] (n222) -- (n2222);
\draw[->,dashed] (n2221) -- (n22211);
\draw[->] (n2221) -- (n22212);
\draw[->,dashed] (n2222) -- (n22221);
\draw[->] (n2222) -- (n22222);

\end{tikzpicture}
\end{center}
\caption{A decision tree $T$.}\label{fig:dt}
\end{figure}
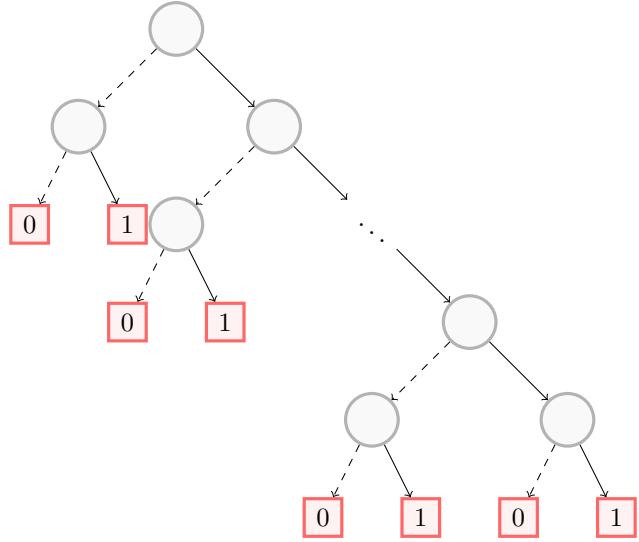

The proof takes advantage of the recursive characterization of the set of all sufficient reasons for an instance given a decision tree,
as made precise by Lemma \ref{prop:computingallSR}.
\begin{itemize}
\item Base case $k=1$. We have $n=1$. $T$ consists of a decision node labelled by the single variable of $X_n$, say $x$, a left child that is a $0$-leaf and a right child
that is a $1$-leaf. $T$ is equivalent to $x$ and $x$ is implied by $t_{\vec x}$. Hence, $x$ is the unique sufficient reason for $\vec x$ given $T$, so it is also
the unique minimal reason for $\vec x$ given $T$. As expected, the number of minimal reasons for $\vec x$ given $T$ is equal to $2^{k-1}$. The size of the unique
minimal reason is $k=1$.
\item Inductive step $k > 1$. Let $x$ be the variable of $X_n$ labelling the root node of $T$. 
By construction, the left child $T_l$ of $T$ is equivalent to a single variable, say $x_l$, that is the unique minimal reason for $\vec x$ given $T_l$.
The right child $T_r$ of $T$ has the same form as $T$, but with depth $k-1$. 
By induction hypothesis, we know that $\vec x$ has $2^{k-2}$ minimal reasons given $T_r$, each of them containing $k-1$ literals.
As shown by Lemma \ref{prop:computingallSR}, provided that the variables labelling the decision nodes are pairwise distinct, 
the minimal reasons for $\vec x$ given $T$ are obtained by extending every minimal reason for $\vec x$ given $T_r$ with $x_l$ 
and by extending every minimal reason for $\vec x$ given $T_r$ with $x$. Accordingly, $\vec x$ has $2 \times (2^{k-2}) = 2^{k-1}$ minimal reasons given $T$
and each of them contains $k-1+1 = k$ literals.
\end{itemize}

Finally, since $n = 2k-1$, we have $k = \frac{n+1}{2}$ and the number of minimal reasons for $\vec x$ given $T$ 
is equal to $2^{k-1} = 2^{\sqrt{n-1}}$.
\end{proof}

\noindent {\bf Proof of Proposition \ref{prop:explanatoryDT}

\begin{proof}
The algorithms to compute $\mathit{Nec}_s(\vec x, T)$, $\mathit{Rel}_s(\vec x, T)$, and  $\mathit{Irr}_s(\vec x, T)$
are as follows: first compute $\cnf(T)$ and then remove from this set of clauses every literal that does not belong to $t_{\vec x}$.
This can be done in $\mathcal O(n \size{T})$ time. By construction, the resulting \cnf\ formula $f$ is monotone: every literal in it occurs with the same
polarity as the one it has in $t_{\vec x}$. Furthermore, the size of $f$ cannot exceed the size of $\cnf(T)$, thus the size of $T$. 

Since $f$ is a monotone \cnf\ formula, its prime implicates can be computed by removing from $f$ every clause that is a strict superset of another clause
of $f$.%\footnote{One must keep in mind here that by construction, $\cnf(T)$ does not contain any valid clause.}. 
%This is a consequence of the correctness of any resolution-based algorithm for generating prime implicates (see e.g. \cite{Marquis00}).
This can be achieved in quadratic time  in the size of $f$, thus in the size of $T$. Let $g$ be the resulting formula in prime implicates form and equivalent to $f$.
$g$ is equivalent to the complete reason for $\vec x$ given $T$. Since it is in prime implicates form, $g$ is Lit-dependent on 
every literal occurring in it (i.e., $g$ is Lit-simplified, see Proposition 8 in \cite{LLM03} for details), hence so is the complete reason for $\vec x$ 
given $T$. 

This means that for every literal $\ell$ occurring in $g$, there exists a sufficient reason for $\vec x$ 
given $T$ that contains $\ell$, so that $\mathit{Rel}_s(\vec x, T)$ is the set of literals occurring in $g$ and $\mathit{Irr}_s(\vec x, T)$
is the complement of $\mathit{Rel}_s(\vec x, T)$ in the set of all literals over $X_n$.
Finally, since by definition the literals of $\mathit{Nec}_s(\vec x, T)$ must belong to every sufficient reason for $\vec x$ given $T$, they are given by the unit
clauses that belong to $g$.
\end{proof}

\noindent {\bf Proof of Proposition \ref{prop:minimal-weightoptim}}
\begin{proof}
Let $\vec x^*$ be any solution of $(C_{\mathrm{soft}}, C_{\mathrm{hard}})$.
Observe that the set of all hard clauses $c \cap t_{\vec x}$ (where $c$ is a clause of $\cnf(T)$) corresponds to a
{\em monotone} \cnf\ formula. Therefore, in order to satisfy
such a clause $c \cap t_{\vec x}$, $\vec x^*$ must set a literal $\ell$ of $t_{\vec x}$ to $1$. 
Thus, $\vec x^*$ satisfies all the hard clauses of $C_{\mathrm{hard}}$ if and only if the term consisting of the literals that 
are shared by $t_{\vec x} = \bigwedge_{i=1}^n \ell_i$ and  $t_{\vec x^*}$ is an implicant of $T$ and is implied by $\vec x$.

%Finally, the soft clauses $\{(\overline{x_i}, w(x_i)) \mid x_i(\vec x) = 1\} \cup \{(x_i, w(x_i)) \mid x_i(\vec x) = 0\}$ are used to
The soft clauses of $C_{\mathrm{soft}}$ %\{\overline{x_i} \mid x_i(\vec x) = 1\} \cup \{x_i, \mid x_i(\vec x) = 0\}$ 
are used to select among the assignments that satisfy all the hard clauses, the ones that correspond to minimal sufficient reasons.
Soft clauses are given by literals $\ell_i$, which are precisely the complementary literals to those occurring in $t_{\vec x}$.
Having a soft clause $\ell_i$ violated by $\vec x^*$ means that the literal $\overline{\ell}$ of $t_{\vec x}$ is necessary to get
an implicant of $T$ given the assignment of the other variables in $\vec x^*$. Whenever a soft clause $\ell_i$ is violated by $\vec x^*$
a penalty of $1$ incurs. This ensures that  the term consisting of the literals 
that are shared by $t_{\vec x} = \bigwedge_{i=1}^n \ell_i$ and  $t_{\vec x^*}$ is 
%a minimal reason for $\vec x$ given $F$.
a minimal sufficient reason for $\vec x$ given $T$.
\end{proof}

\noindent {\bf Proof of Proposition \ref{prop:allContrastiveDT}

\begin{proof}
By definition, the sufficient reasons $t$ for $\vec x$ given $f$ are the prime implicants of $f$ that covers $\vec x$. Thus, they are precisely the prime implicants of 
the (conjunctively-interpreted) set of clauses $\{c \cap t_{\vec x}: c \in \cnf(f)\}$ where $\cnf(f)$ is any \cnf\ formula equivalent to $f$.
Furthermore, the complete reason for $\vec x$ given $f$ (equivalent to the disjunction of all the sufficient reasons for $\vec x$ given $f$ \cite{DarwicheHirth20}) 
is a monotone Boolean function because every sufficient reason covers $\vec x$ which assigns in a unique way every variable from $X_n$. 
The prime implicates of such a monotone function are precisely the minimal hitting sets of
the prime implicants of the function. Because of the minimal hitting set duality between sufficient reasons and contrastive explanations for $\vec x$ given $f$ \cite{DBLP:journals/corr/abs-2012-11067}, the contrastive explanations for $\vec x$ given $f$ are thus the sets of literals corresponding to the prime implicates of $\{c \cap t_{\vec x}: c \in \cnf(f)\}$.
Now, since the (conjunctively-interpreted) set of clauses $\{c \cap t_{\vec x}: c \in \cnf(f)\}$ is equivalent to the complete reason for $\vec x$ given $f$, it is a monotone function, and as a consequence, its prime implicates are its minimal elements w.r.t. $\subseteq$. This comes from the correctness of any resolution-based algorithm for generating prime implicates (see e.g.,  \cite{Marquis00}). Finally, when $f$ is a decision tree $T$, $\{c \cap t_{\vec x}: c \in \cnf(T)\}$ can be computed 
in time polynomial in $n+\size{T}$ because $\cnf(T)$ can be computed in time linear in $\size{T}$. Using an extra quadratic time in the size of this set $\{c \cap t_{\vec x}: c \in \cnf(T)\}$, its minimal elements w.r.t. $\subseteq$ can be selected. The resulting set is by construction 
the set of all the contrastive explanations for $\vec x$ given $T$, and this set has been computed in time polynomial in $n+\size{T}$.
\end{proof}

\end{document}